\definecolor{shadecolor}{rgb}{0.9,0.9,0.9}
\theoremstyle{plain}
\newtheorem{thm}{Theorem}[section]
\newtheorem{lem}[thm]{Lemma}
\newtheorem{prop}[thm]{Proposition}
\theoremstyle{definition}
\newtheorem{exa}{Example}
\crefname{exa}{Example}{Examples}
\theoremstyle{remark} 
\newtheorem{rmk}{Remark} 
\newcommand{\new}{\newcommand}
\providecommand{\nor}[1]{\lVert{#1}\rVert}
\providecommand{\abs}[1]{\lvert{#1}\rvert}\providecommand{\set}[1]{\{#1\}}
\providecommand{\scal}[2]{\langle{#1},{#2}\rangle}
\providecommand{\tr}[1]{\operatorname{Tr}{\left(#1\right)}}
\providecommand{\argmin}[2]{\operatornamewithlimits{argmin}\limits_{#1}{#2}}
\providecommand{\wh}[1]{\widehat{#1}}
\providecommand{\wc}[1]{\widecheck{#1}}
\providecommand{\Id}{\operatorname{I}}
\new{\sinc}{\operatorname{sinc}}
\new{\R}{\mathbb R}
\new{\C}{\mathbb C}
\new{\N}{\mathbb N}
\new{\Z}{\mathbb Z}
\new{\hh}{\mathcal H}
\new{\kk}{\mathcal K}
\new{\F}{\mathcal F}
\new{\la}{\lambda}
\new{\eps}{\epsilon}
\new{\ga}{\gamma}
\new{\conv}{\convolution}
\new{\X}{L^1}
\new{\Y}{L^2}
\providecommand{\PP}[1]{ \mathbb P\left[#1\right]}
\providecommand{\EE}[1]{ \mathbb E\left[#1\right]}
\new{\wn}{w_n}
\new{\Cn}{ C_n}
\new{\Dn}{ {\mathbf Z}}
\new{\Mh}{\wh{A}}
\new{\M}{\mathcal M_{\wh{K}}}
\new{\Sr}{S}
\new{\Sn}{S_n}
\new{\Tn}{\Sigma_n}
\begin{document}
\title{Learning  convolution operators on compact abelian groups}

\author{Emilia Magnani}
\address{Emilia Magnani, 
T\"ubingen AI Center, University of T\"ubingen, T\"ubingen, Germany}
\email{emilia.magnani@uni-tuebingen.de }

\author{Ernesto De Vito}
\address{E.~De Vito, MaLGa,, DIMA, Universit\`a degli Studi di Genova,  Via
  Dodecaneso 35, Genova,   Italy  }
\email{ernesto.devito@unige.it}

\author{Philipp ~Hennig}
\address{Philipp Hennig, T\"ubingen AI Center, University of T\"ubingen, T\"ubingen, Germany}
\email{ philipp.hennig@uni-tuebingen.de }

\author{Lorenzo~Rosasco}
\address{L. Rosasco, MaLGa, DIBRIS,  Universit\`a degli Studi di Genova,  Via
  Dodecaneso 35, Genova, \\
  Center for Brains Minds and Machine, MIT, Cambridge USA\\
  Istituto Italiano di Tecnologia, Genova, Italy.  }
\email{lrosasco@mit.edu}

\date{\today} 

\begin{abstract}
We consider the  problem of learning convolution operators associated to  compact Abelian groups.
We study a regularization-based approach and provide corresponding learning guarantees under natural regularity conditions on the convolution kernel. 
More precisely, we assume the convolution kernel  is  a function in a translation invariant 
Hilbert space and analyze a natural  ridge regression (RR) estimator. 
Building on existing results for RR, we  characterize the accuracy of the estimator in terms of finite sample bounds. 
Interestingly,  regularity assumptions which are classical in the analysis of RR,  have a novel and natural interpretation  in terms of space/frequency localization. 
Theoretical results are illustrated by  numerical simulations.   
\end{abstract}

\keywords{Machine learning, Operator learning, Statistical learning theory}

\subjclass[2010]{68T05,  47A52, 42B10, 62J07}


\maketitle

\section{Introduction}\label{sec:intro}
The key problem in machine learning is estimating an input/output function \( f \) of interest from random input/output pairs \( (x_i, y_i)_{i=1}^n \). Classically, inputs are vectors in \( \mathbb{R}^d \), and outputs are binary or scalar values. However, as the scope and number of machine learning applications expand frantically, it is interesting to consider functional relationships between more general inputs and outputs. Relevant to this study is the case where both inputs and outputs are elements of infinite-dimensional spaces, such as Hilbert or Banach spaces, so that \( f \) can be viewed as an operator. This setting has recently received considerable attention, driven by applications in image and signal processing, and more generally in scientific and engineering contexts where data are described by integral and partial differential equations (PDEs), see \cite{kovachki24} and references therein. 

In this paper, we focus on a special class of linear operators, namely convolution operators on an Abelian group $G$,
\[
x \conv w_*(t) = \int_G x(\tau) w_*(t - \tau) \, d\tau.
\]
We consider a statistical framework where the inputs are random signals, and the outputs are noisy images of the convolution operator applied to the inputs, expressed as
\begin{equation}\label{firsteq}
y_i = x_i \conv w_* + \epsilon_i.
\end{equation}
We address the case where translations, and hence convolutions, are defined by a compact Abelian group. Convolution operators form a special class of linear operators that can be characterized by the convolution kernel \( w_* \), together with the properties of the underlying group. As we show, tools from harmonic and Fourier analysis can be employed to gain insights into the structure of the problem and develop a tailored analysis. Before describing our main contributions, we provide some context for our study, discussing a number of related works and results.

\subsection{Related work}\label{sec:related-work}
While we are not aware of studies focusing specifically on learning guarantees for convolution operators, this problem is related to different yet related questions. We will briefly review these connections next.\\
\textbf{Operator learning.} As already mentioned, operator learning has received significant attention lately. On the one hand, there has been a growing literature on neural network approaches, such as the so-called neural operators \cite{kovachki23}; see also \cite{goswami23} and the references therein for an overview. On the other hand, learning-theoretic studies have largely focused on learning linear operators with kernel methods, since they are amenable to a more complete analysis. A recent survey can be found in \cite{kovachki24}, whereas a partial list of references includes \citep{de2023convergence,de2005learning,Trabs_2018,mollenhauer2022learning,JMLR:v17:11-315,arridge2019solving,Bleyer_2013,tabaghi2019learning}. These latter studies consider estimators and technical tools analogous to those in this paper. Indeed, the convolution operators we consider are an example of linear operators. However, their special structure allows for a tailored analysis that highlights the specific structure of the problem. In particular, it is possible to focus on the convolution kernel as the primary object of interest, using functional analytic tools rather than operator analytic tools, and learning bounds in different norms can be derived, leveraging the regularity properties of the convolution kernel. 
A recent contribution in this direction is \cite{zhang2025minimax}, which considers a more general class of integral operators and derives minimax rates in certain spectral Sobolev spaces. 
While their approach covers many integral kernels, our setting leverages the commutative group structure and frequency-domain localization, leading to interpretations that are not considered in their framework.
In particular, regularity assumptions needed to derive learning bounds in our setting can be related to localization properties.\\
\textbf{Learning Green functions.}
Motivated by applications in PDEs, another class of linear operators that has been considered are integral kernel operators, where the kernel can be related to Green functions; see  \cite{JMLR:v23:22-0433} and references therein. 
In our setting, the hypothesis space has a special structure, since it is defined by a translational invariant kernel and reflects the group structure. The analysis in this case cannot be recovered from the case of general integral kernel operators. \\
\textbf{Functional regression.}
The problem we study is related to functional data analysis, where input/output data are represented as functions in a continuous domain; see, e.g., \cite{ramsay,Mas2009LinearPF} and \cite{10.3150/12-BEJ469,hormann2015note,REIMHERR201562}. For example, the studies in \cite{lian2015minimax,kupresanin2010rkhs,10.1214/09-AOS772} address the analysis within Reproducing Kernel Hilbert spaces. Our framework is more general because we consider the general Abelian group setting. At the same time, it is more specific since we focus on convolution kernels, and we can derive and interpret results in different norms.\\
\textbf{Linear time-invariant system estimation.}
Learning convolution kernels and convolution operators is close to identification of linear time-invariants in control theory and signal processing. We refer to \cite{ljung1998system} for a classic reference and to  \cite{brunton2022modern} for modern data-driven approaches that use optimization and machine learning. Estimating linear time-invariant systems involves learning the so-called impulse response or transfer function, which can often be represented as a convolution operator. Indeed, the tools we employ, such as commutative Harmonic Analysis, parallel the techniques used in identification of linear time-invariant systems. Our contributions extend these ideas by focusing on a more general framework where the underlying structure is defined by a compact Abelian group. Moreover,  we consider random inputs and additive noise in the observations, making the problem inherently statistical. \\
\textbf{Blind deconvolution.}
The problem of learning a convolution kernel is related to the so-called blind deconvolution problem; see, e.g., \citep{justen2009general}. A number of machine learning approaches primarily based on neural networks have been considered for this problem \cite{schuler2015learning,6618928,EGMONTPETERSEN20022279,de2003nonlinear,burger2001regularization,Kundur489268}, particularly with respect to the problem of image recovery from blurred photographs \cite{motion10.1145/1618452.1618491,Li978-3-642-15549-9_12}. Typically, only a discrete setting is considered, and no theoretical results are developed. Closely related to our study is the approach in \cite{schuler2015learning}, which focused on algorithmic aspects. Here, we complement these latter results by considering a continuous setting and deriving learning theoretic guarantees.

\subsection{Contribution}
In the context of the learning model in \cref{firsteq}, our main contribution is the analysis of the learning properties of a  ridge regression estimator in terms of non-asymptotic learning bounds. More precisely, we consider convolution operators defined by a compact Abelian group \( G \). With this choice, we identify the input and output spaces with \( L^1(G) \) and \( L^2(G) \), respectively, which are the Lebesgue spaces defined by the Haar measure associated with \( G \). We further assume that the convolution kernel $w^*$ in \cref{firsteq} belongs to  \( L^2(G) \), so that the convolution operator is well defined from $L^1(G)$ into $L^2(G)$.

Moreover, to characterize and leverage the potential regularity properties of the convolution kernel, we consider a translation-invariant  Hilbert space $\hh$ as the hypothesis space in which the estimator of $w^*$ is sought. Under the assumption that $\hh\subset L^2(G)$, an associated  ridge regression estimator is then studied. Both the properties of translation-invariant  hypotheses space $\hh$ and the computations required by the ridge regression estimator can be given a special characterization using the Fourier transform associated with the group. The learning error is studied by adapting results from ridge regression theory. However, we do not assume that the hypothesis space is a reproducing kernel Hilbert space, as is usual for kernel methods in machine learning, but we exploit the fact that $\hh$ is translation invariant. 

In particular, we consider the error decomposition introduced in \cite{caponnetto2007optimal} and the refinements developed in \cite{rucaro15}. Although convolution operators can be studied as a special case of linear operator learning, a tailored analysis is instructive and highlights the special structure of the problem. Specifically, the operator norms and functional norms of the convolution kernel can be related, and different functional norms can be considered. Furthermore, standard regularity assumptions (source and capacity conditions \cite{rosasco2008model,caponnetto2007optimal}) can be described in terms of space/frequency localization properties of the signals.
In particular, our results show that the input signal localization affects estimation differently depending on the norm used to measure the learning error. Finally, we illustrate some of the theoretical findings with numerical simulations.

\subsection*{Plan of the paper}
The remainder of the paper is organized as follows. In Section~2, we provide essential background on group theory and Harmonic Analysis. Section~3 introduces our statistical model and methodology for learning convolution operators, including the use of translation-invariant  Hilbert spaces and  ridge regression. In Section~4, we present the theoretical analysis of the learning error, discuss the main results, and explore the implications of a-priori assumptions in terms of space and frequency localization properties. Section~5 is devoted to numerical simulations, where we validate the theoretical error bounds and demonstrate the applicability of our framework to approximating heat kernels in the context of partial differential equations. We conclude the paper in Section~6 with a summary of our contributions and directions for future research. Detailed proofs of our results are provided in the appendices.
\subsection*{Notation}
If $v,w$ are vectors in $\R^d$, $v \cdot w$ denotes the scalar product
between $v$ and $w$, and $\abs{v}$ is the Euclidean norm of $v$. If
$A$ is a bounded operator between two Banach spaces $E$ and $F$, we
denote by $\nor{A}_{E,F}$ the operator norm, by $A^*: F^*\to A^*$ the
adjoint and by $\mathcal B(E,F)$ the Banach space of bounded linear
operators between $E$ and $F$ endowed with operator norm. 
If  $A$ is a self-adjoint operator on a Hilbert space, we denote by $\sigma(A)$ the spectrum of $A$.

\section{Background on group theory  and Harmonic Analysis}\label{sec:setup}

In this section,  we recall basic notions and facts from  group theory and
from commutative Harmonic Analysis. In particular, we  focus on compact Abelian groups and introduce the  notions of convolution operators and Fourier transform, which we illustrate with some examples. We refer to  some  standard reference such as \cite{rudin82} for further readings. 

Let $G$ be a compact Abelian group, we denote
by $+$ the (additive) group law  and by $t$ the elements of $G$. 
We let $dt$ be the Haar measure  on $G$ normalized to $1$.  Given $p\in [1,+\infty]$, we
let   $L^p=L^p(G,dt)$ be the corresponding  Lebesgue space with norm
$\nor{\cdot}_{p}$ and we denote by $\scal{\cdot}{\cdot}_2$ the scalar product in
$L^2$. 

The dual group of $G$ is denoted by $\wh{G}$, and is a discrete 
Abelian group. The elements of $\wh{G}$ are denoted by $\xi$,
and, for each $\xi\in \wh{G}$ 
\[
G\ni t\mapsto \scal{\xi}{t} \in \C
\] 
is the corresponding character. Since $\wh{G}$ is discrete, the Haar
measure of $\wh{G}$ is the counting measure and the corresponding
Lebesgue spaces are  $\ell^p=\ell^p(\wh{G})$. 

Let $\F:L^1\to \ell^\infty $ be the Fourier transform 
\[
(\F x)_{\xi} = \int_{G} x(t) \, \overline{\scal{\xi}{t}} \, dt,  \qquad \xi\in\wh{G},
\]
where $x\in L^1$. With a standard slight abuse of notation, we let
$\F^{-1}:\ell^1\to L^\infty $ be defined as
\[
(\F^{-1} \wh{x})(t) =  \sum_{\xi\in \wh{G}} \wh{x}_{\xi} \,\scal{\xi}{t},
\qquad t\in G, 
\]
where $\wh{x}=(\wh{x})_{\xi\in\wh{G}}\in \ell^1$. The Fourier transform $\F$, restricted to
$L^2\subset L^1$, is a unitary map  onto $\ell^2\subset\ell^\infty$
and its inverse is given by the unitary extension of 
$\F^{-1}$ from $\ell^1$ to $\ell^2$ and  it is equal to $\F^*$. 

For any $x\in L^p$, the function  $\wc{x}\in L^p$ is defined as
\[\wc{x}(t)=\overline{x(-t)},\]
and, for any $x\in L^1$, 
\begin{equation}
  \label{eq:3}
  \F \wc{x} = \overline{\F x}.
\end{equation}
Moreover, for any $t\in G$ the translation operator is defined as
\[
T_t: L^1 \to L^1 \qquad (T_t x)(s)=x(s-t),
\]
which is a surjective isometry and it holds true that
  \begin{equation}
(\F T_t x)_{\xi} = \overline{\scal{\xi}{t}} (\F x)_{\xi}.\label{eq:94}
\end{equation}
Given $x\in L^1$ and $y\in L^p$, we set $x\conv y $ be the convolution 
\[
x\conv y (t) = \int_{G} x(\tau) y(t-\tau)\, d\tau,  \quad t \in G,
\]
so that $x\conv y\in L^p$,
\begin{equation}
  \label{eq:1}
  \nor{x\conv y}_p\leq \nor{x}_1 \nor{y}_p\ 
\end{equation}
which implies that  the bilinear map 
\begin{equation}
  \label{eq:96}
  L^1\times L^p\ni (x,y)\mapsto x\conv y \in L^p
\end{equation}
is jointly continuous. 
For any fixed $y\in L^p$, we define the convolution operator
\[
C_y : L^1 \to L^p, \quad (C_y x)(t) = x\conv y(t), \quad t\in G.
\]
The inequality \eqref{eq:1} implies that $C_y$ is a bounded linear operator.

If $x\in L^1$, $y\in L^p$ and $z\in L^q$ with  $1/p+1/q=1$, 
\begin{equation}
  \label{eq:33}
  \scal{x\conv y}{z}_{p,q}=\scal{y}{\wc{x}\conv z}_{p,q} ,
\end{equation}
where $\scal{\cdot}{\cdot}_{p,q}$ is the sequilinear duality pairing between $L^p$
and $L^q$.  The convolution theorem states that
\begin{equation}
  \label{eq:2}
  (\F (x\conv y))_\xi = (\F x)_\xi (\F y)_\xi
\end{equation}
for any $x,y\in L^1$. 

Let   $\set{e_\xi}_{\xi\in \wh{G}}$ be the Fourier base of $L^2$ and
$\set{\wh{e}_\xi}_{\xi\in \wh{G}}$ be the canonical base of $\ell^2$,
 i.e.\ for all $\xi\in\wh{G}$ 
\begin{alignat*}{1}
  e_\xi(t) = \scal{\xi}{t},  \qquad t\in G, \qquad\qquad    (\wh{e}_\xi)_{\xi'}  =\delta_{\xi,\xi'}, \qquad \xi'\in\wh{G},
\end{alignat*}
then
\begin{equation}
  \label{eq:73}
  \begin{split}
    & \F e_\xi  = \wh{e}_\xi,  \qquad \qquad \xi\in\wh{G} \\
 & x \conv e_\xi  = (\F x)_\xi \ e_\xi,  \qquad \xi\in\wh{G} \\
& \scal{y}{e_\xi}_2  =(\F y)_\xi,  \qquad y\in L^2, \ \xi\in\wh{G}.
  \end{split}\quad 
\end{equation}
The primary examples of the group $G$ are the $d$-dimensional torus and
the group of circulant matrices. 
\begin{exa}[Torus]
Fix $d\in \N$ with $d\ge 1$. Let $G=(\R/\Z)^d\simeq [0,1]^d$,   regarded as the additive group. The Haar measure $dt$ is the {\it restriction} of
the Lebesgue measure to $[0,1]^d$, the dual group $\wh{G}$ is $\Z^d$ with the pairing 
\[
\scal{\xi_\ell}{t} = e^{2\pi i\ell\cdot t} \qquad  t\in (\R/\Z)^d\quad
\ell\in\Z^d,
\]
where, for sake of clarity, we denote  $\xi=\ell\in\Z^d$,  and the Haar measure  of $\wh{G}$ is the counting measure on $\Z^d$.

\end{exa}
\begin{exa}[Circulant matrices] \label{example:circulant}
Fix $N\in \N$ with $N\geq 2$, and set $G=\Z_N=\set{0,\ldots, N-1}$ regarded
as an additive group modulo $N$.  We denote the elements of $G$ by
$j=0, \ldots, N-1$.  The Haar measure is the counting measure, the dual group
 $\wh{G}$ coincides with $\Z_N$ with the pairing 
\[
\scal{\xi_j}{j'} = e^{i\frac{ j j'}{N}} \qquad   j,j'\in \Z_N\ .
\]
The convolution is given by
\begin{equation}
\label{discrete_conv}
  ( {x} \ast {y} )_j= \sum _{j'=0}^{N-1} {x}_{j'} {y}_{j-j'}\ .
\end{equation}
The indexing ${x}= ({x}_0, \dots , {x}_{n-1})$ is possible
because the indices are evaluated mod $n$. As vector spaces, 
$L^P=\ell^p=\C^N$,  the convolution operator 
$C_y: \C^N \to \C^N, \ C_y x= x\conv y$ 
is the $N\times N$ circulant matrix
\begin{equation} \label{circulant}
C_y = 
\begin{bmatrix} 
    {y}_{0} & {y}_{N-1} & \dots & \dots & {y}_1 \\
    {y}_1 & {y}_0  & & & \vdots & \\
    {y}_2 & {y}_1  & \ddots & & \vdots & \\
    \vdots & \vdots  & &  & \vdots \\
    {y}_{N-1} &  {y}_{N-2}  & & \dots    & {y}_{0} 
    \end{bmatrix}.
\end{equation}
\end{exa}
Provided with the above discussion we next describe the problem of learning convolution operators from random samples using a  ridge regression approach.

\section{Learning convolution operators with regularization}
In this section, we provide a statistical learning framework to learn  convolution operators. Then, we 
introduce translation invariant Hilbert spaces and the   ridge regression estimator we study. 

\subsection{Statistical model}\label{sec:stat_model}
We let $\mathcal X=L^1$ and $\mathcal Y=L^2$ and we  view $\X$ as
the input space and $\Y$ as the output space.  We let $\hh \subseteq
L^2$ be a Hilbert space of hypothesis and $j:\hh \hookrightarrow L^2$ the canonical embedding. 
Let $(X,Y)$ be a pair of random variables such that 
\begin{enumerate}[i)]
 \item the random variable $X$ takes values in $\X$ and it is bounded,
    i.e.\ 
\begin{equation}
  \label{eq:17}
  \nor{X}_1\leq D_X,  \qquad \text{almost surely}
\end{equation}
for some constant $D_X>0$; 
\item the random variable $Y$ takes values in $\Y$ and satisfies
\begin{equation}
  \label{eq:14}
  Y= C_* X + \eps, 
\end{equation}
where $C_*:= C_{w_*}$, with $ C_\ast x = x\conv {j(w_*)}$ for some $w_*\in\hh$, and 
and  $\eps$ is a random variable in $\Y$ such that
\begin{equation}
  \label{eq:16}
  \EE{\eps\mid X}=0 \qquad \EE{\nor{\eps}_{L^2}^m\mid X }\leq
  \frac{m!}{2} M_\eps^{m-2} \sigma_\eps^2 \quad m\geq 2
\end{equation}
for some $M_\eps,\sigma_\eps>0$. 
\end{enumerate}
Eqs.~\eqref{eq:17}--\eqref{eq:16} describe a natural regression model: the random output $Y$ is a
noisy image of the random input $X$ convolved with an unknown convolution kernel $w_*$ belonging to $\hh$. This last
assumption corresponds to the
assumption that the model is  well-specified in the context of regression.  

We will see that deriving learning bounds requires assumptions on the convolution kernel \( w_* \) and the distribution on the space of input signals \( \mathcal{X} \). These assumptions are standard in the theory of ridge regression, but can be given a more explicit interpretation in the context of convolution operator learning, particularly in terms of the localization properties of input signals. The following two examples illustrate how  the input variable \( X \) can be well-localized either in the frequency domain or the spatial domain.

\begin{exa}[Frequency localization]\label{freq_loc}
Let $G=\R/\Z=[0,1]$ be the one-dimensional torus, let $
(p_\ell)_{\ell\in\Z}$ be a probability distribution on $\Z$ and let
$X\in L^1$  be such that 
\begin{equation}    \label{rv-ex-freq-loc}  
\PP{ X= e_\ell}= p_\ell \qquad \ell\in\Z,
\end{equation} 
where for each $\ell\in\Z$, the function  $e_\ell(t)= e^{2\pi i \ell t}$ is the trigonometric
monomial. Clearly, $X$ is localized in the frequency domain at  each point
$\ell$ with probability $p_\ell$, whereas in the space domain $X$ is not
localized, since  $\abs{X(t)}=1$ for all $t\in [0,1]$ with probability
$1$. 
\end{exa}

\begin{exa}[space localization]\label{space_loc}
Let $G=\R/\Z=[0,1]$ be the one-dimensional torus and $\tau$ be a random
variable taking value in $G$.  Fix $0<\delta\leq 1/2$ and define 
\begin{equation}    \label{rv-ex-space-loc}         
 X=\frac{1}{2\delta}  T_\tau \mathbbm{1}_{[-\delta, \delta]}\qquad
 X(t) =
 \begin{cases}
   \frac{1}{2\delta} &   \tau-\delta\leq  t \leq \tau+\delta \\
     0 & \text{otherwise}
 \end{cases}
\end{equation}
so that, for small $\delta$, $X$ is localized in a
$\delta$-neighbourhood of the random point $\tau$.  Since the Fourier
coefficients of $X$ are given by 
 \begin{equation}
\wh{X}(t)_\ell = \frac{1}{2\delta} \int_{-\frac 12}^{\frac 12} 
\mathbbm{1}_{[-\delta, \delta]}(t)
e^{-2 \pi i \ell (t+\tau)}  \ dt  
 = \frac{e^{-2 \pi i \ell \tau}}{2\delta}
\int_{-\delta}^{\delta}   e^{-2 \pi i \ell t} \ dt 
= e^{-2 \pi i \ell \tau } \text{sinc} ({ 2\pi \delta \ell } ),
\end{equation}
then $\abs{\wh{X}(t)_\ell}=\text{sinc} (2\pi
  \delta \ell )\simeq 1$  for all $|\ell|\ll \frac{1}{\delta}$ with
probability $1$, so that for small $\delta$,  $X$ is not localized in
the frequency domain. 
\end{exa}

\subsection{Translation invariant Hilbert spaces}
We now describe the class of hypothesis spaces we consider. Given the properties of convolution operators, a natural choice is  Hilbert spaces invariant under translations by any elements $t\in G$. These spaces can be easily characterized using the Fourier transform, as we recall next. We refer to \cite{berg84},  for further details. Indeed, let $\set{\wh{K}}_{\xi\in \wh{G}}$ be a family   such that 
\begin{equation}
  \label{eq:31}
 0\leq \wh{K}_{\xi} \leq D^2_{K}, \qquad \xi\in \wh{G},
\end{equation}
for some $D_K>0$.  Define the space 
  \begin{equation}
\hh = \set{ w\in L^2\mid \sum_{\xi\in \wh{G}} \dfrac{\abs{ (\F
    w)_{\xi}}^2}{\wh{K}_{\xi}}  <+\infty },\label{eq:8}
\end{equation}
where $(\F w)_{\xi}=0$ whenever $\wh{K}_{\xi}=0$. The space $\hh$ is
endowed with the scalar product 
\begin{equation}
  \label{eq:6}
  \scal{w}{w'}_\hh = \sum_{\xi\in \wh{G}}  \dfrac{(\F w)_{\xi}
  \overline{(\F w')_{\xi} }}{\wh{K}_{\xi} } .
\end{equation}
It is a standard result that $\hh$ is a  Hilbert space with a
continuous embedding $j: \hh \to L^2$. Moreover,  $\hh$ is invariant
under translations, i.e.\ $T_t\hh=\hh$  for any $t\in G$.  

\begin{rmk}\label{rmk:basis}
Recall that $\set{e_\xi}_{\xi\in \wh{G}}$ is the (Fourier basis)  of
$L^2$ and set 
\begin{equation}
  \begin{split}
    & \wh{G}_*=\set{\xi\in \wh{G},\wh{K}_{\xi}\neq 0 } \\
    &  f_\xi = \wh{K}_{\xi}^{\frac 12} e_\xi \qquad \xi\in\wh{G}_*.
  \end{split}
\label{eq:58} \ 
\end{equation}
Then $\set{f_\xi}_{\xi\in \wh{G}_* }$ is a basis of $\hh$ and,  for any $w\in\hh$
\begin{equation}
  \label{eq:71bis}
  \scal{w}{f_\xi}_\hh = \dfrac{(\F w)_\xi }{\wh{K}_\xi^{\frac{1}{2}}} .
\end{equation}
\end{rmk}

\begin{rmk}
 Assume that  $\wh{K}$ is in $\ell^1$. Denote by $k: G\times G\to \C$
\[ k(t,t')=(\F^{-1} \wh{K})(t-t'),\]
then $k$ is a positive definite kernel and $\hh$ is the reproducing kernel
Hilbert space with reproducing kernel $k$. Conversely, by Bochner's
theorem any translational invariant reproducing kernel Hilbert space on $G$ with
a continuous integrable reproducing  kernel is of the above form. 

We recall that, given an arbitrary set \( G \), a reproducing kernel Hilbert space \( \hh \) is a Hilbert space of functions from \( G \) to \( \R \), equipped with a kernel function \( k: G \times G \to \C \) such that, for all \( t \in G \), \( k(t, \cdot) \in \hh \), and for all \( w \in \hh \), \( w(t) = \scal{w}{k(t, \cdot)}_\hh \). From this definition, it follows that \( k \) is positive definite. 
It is a classic fact that the converse also holds: every positive definite kernel uniquely defines a reproducing kernel Hilbert space \citep{Schwartz1964}.

However, if the sequence $\wh{K}$ is not in $\ell^1$, in general $\hh$ is not a reproducing kernel Hilbert space on $G$, as for example, if 
$\wh{K}_{\xi}=1$ for all $\xi\in\wh{G}$, then $\hh=L^2$, which is not a reproducing kernel Hilbert space unless $G$ is finite.  
\end{rmk}

We
provide some non-trivial examples of hypothesis spaces for the one-dimensional 
torus $G=\R/\Z$, so that $\wh{G}=\Z$, and, for sake of clarity,  we
denote $\xi=\ell\in \Z$. These examples can easily extended to any dimension.
\begin{exa}[Periodic Sobolev spaces]
Fix $s>0$ and choose
          \begin{equation}
	\wh{K}_\ell= 
	\begin{cases}
	\frac{1}{4\zeta(2s)} \frac{1}{\abs{\ell}^{2s}} & \ell\neq 0 \\
	   \frac{1}{2}   & \ell=0
	\end{cases},\label{eq:7}  
      \end{equation}
where $\zeta$ is the Riemann zeta function, then $\hh$ is the Sobolev
space $H^s$ of  periodic functions on $\R$ with 
period $1$, and $\hh$ is a dense subspace of $L^2$.  If $s>1/2$, $\hh$
is a reproducing kernel Hilbert space.  With the
choice $s=1$, i.e.\
          \begin{equation}
	\wh{K}_\ell= 
	\begin{cases}
	\frac{3}{2\pi^2}  \frac{1}{\ell^2} & \ell\neq 0 \\
         \frac{1}{2}     & \ell=0
	\end{cases}, \label{eq:8bis}     
\end{equation}
we have an explicit form for the kernel given by
          \begin{equation}
	K(t)=3t^2-3t+1 \qquad t\in [0,1],\label{eq:9}
      \end{equation}
see (144.3) \cite[page. 47]{grry07}.
\end{exa}

\begin{exa}[Exponential decay on the torus]
Fix $\ga>0$ and set 
 \begin{equation}
	\wh{K}_\ell=  \frac{b-1}{b+1}  b^{-|\ell|} \qquad 
        \ell\in \Z,\label{eq:10}
      \end{equation}
where 
\[
b= (\ga+1)+\sqrt{\ga(\ga+2)}>1 \qquad \Longleftrightarrow \qquad	
\ga=\frac{(b-1)^2}{2b}, 
\]        
then $\hh$ is a reproducing kernel Hilbert space with kernel given by
          \begin{equation}
            \begin{aligned}
              K(t)  & = \dfrac{\ga}{\ga + \sin^2(\pi t)} 
            \end{aligned},
\label{eq:11}
      \end{equation}
see (147.3) \cite[page. 48]{grry07}.  Note that 
$\ga$ is an increasing function of $b$ running over
	$(0,+\infty)$ and it is strictly
	decreasing on the interval $[0,1]$ with minimum given by
	$\ga/(\ga+1)$.  Moreover, 
	\[
	\lim_{\ga\to 0} K_{0,\ga} (x)=
	\begin{cases}
	1 & x=0,1 \\
	0 &  0<x<1
	\end{cases}
	\qquad\qquad \lim_{\ga\to +\infty} K_{0,\ga} (x)= 1.
	\]
\end{exa}

\begin{exa}[Trigonometric polynomials]
Fixed $N\in \N$ and set
    \begin{equation}
  \wh{K}_\ell =
  \begin{cases}
    \wh{K}_\ell = \frac{1}{2N+1}  &  \abs{\ell}\leq N \\
     \wh{K}_\ell  = 0  & \abs{\ell}>N
  \end{cases}, \label{eq:12}
\end{equation}
then $\hh$ is a (finite dimensional) reproducing kernel Hilbert space with reproducing
kernel given by 
  \begin{equation}
K(t) = \dfrac{\sinc{(\pi(2N+1)t)}}{\sinc{(\pi t)}} \label{eq:13}.
\end{equation}
\end{exa}

\subsection{Ridge regression}\label{sec:algorithm}
We next discuss a natural ridge regression estimator adapted to learn convolution operators.
As shown next, specific expressions are available in this case.

Given an independent  family  of random variables $(X_1,Y_1)$,\ldots, $(X_n,Y_n)$,
which are  identically distributed as $(X,Y)$, we set
\begin{equation}
 \wn^\la = \argmin{w\in\hh}{ \left(\frac{1}{n} \sum_{i=1}^n \nor{ X_i\conv j(w)
    - Y_i}_2^2+\la \nor{w}^2_{\hh}\right)}  \label{eq:18}  
\end{equation}
where $\la>0$ is a positive parameter, and we denote by 
\begin{equation}
  \label{eq:32}
  \Cn^\la:\X\to \Y \qquad  \Cn^\la x= x\conv j(\wn^\la)
\end{equation}
the corresponding convolution operator. 
As usual, ${\cdot}_n$ is a
compact notation for the dependence on the training set
$\Dn=\left((X_1,Y_1),\ldots, (X_n,Y_n)\right)$.  
When $\hh$ is a translation invariant  Hilbert space, as shown by
Prop.~\ref{prop:estimator}, the estimator
$w_n^\la$ has a simple expression in the Fourier domain
\begin{equation}
  \label{eq:66}
  (\F j(\wn^\la))_\xi=
  \begin{cases}
    \dfrac{\frac{1}{n}\sum_{i=1}^n (\F Y_i)_\xi \overline{(\F
        X_i)_\xi} }{\frac{1}{n}\sum_{i=1}^n |(\F X_i)_\xi|^2+ \la
      \wh{K}_\xi^{-1} } &  \xi\in \wh{G}_* \\
  0 &  \xi\notin \wh{G}_*.
  \end{cases}\, 
\end{equation}
As expected, each Fourier component is the solution of a
one-dimensional (regularized)  least square problem where the
regularization parameter $\la \wh{K}_\xi^{-1}$ depends on the
frequency. This expression highlights that  $\wh{K}$, and hence the
corresponding hypotheses space $\hh$, allow to modulate  the regularization
parameter for each frequency $\xi\in \wh{G}$. Clearly, the above expression can be exploited for improved computations. Here, we omit this discussion and focus on the learning guarantees of the corresponding ridge regression estimator.

\section{Theoretical analysis of  the learning error}
Our main result is a quantitative analysis of the learning error in estimating  the unknown convolution operator $C_*$ using the  ridge regression  estimator $\Cn^\la$. Different error measures can be considered. A natural error measure is the expected least squares error of $\Cn$ 
\[
\EE{ \nor{\Cn^\la X-Y}_2^2 \mid \Dn} =  \EE{\nor{\Cn^\la X- C_*X}_2^2 \mid
\Dn }+ \EE{\nor{\eps}_2^2}, 
\]
where the second equality is due to eqs.\ \eqref{eq:14} and \eqref{eq:16}, and 
\begin{alignat*}{1}
  \EE{\nor{\Cn^\la X- C_*X}^2_2\mid\Dn}
\end{alignat*}
is the   excess error.
By a simple computation  and Thm.~\ref{thm:basic facts},  we can rewrite the excess error as 
\begin{equation}
  \label{eq:56}
  \EE{\nor{\Cn^\la X- C_*X}^2_2\mid\Dn} = \nor{\Sigma^{\frac 12} (\wn^\la-w_*)}_\hh^2,
\end{equation}
where $\Sigma:\hh\to\hh$ is the diagonal operator on the basis
$\set{f_\xi}_{\xi\in \wh{G}_*}$ defined by \cref{eq:58}, i.e.\
\[
\Sigma f_\xi=\sigma_\xi f_\xi\qquad \sigma_\xi =  \wh{K}_{\xi} \  \EE{
  \abs{(\F  X)_{\xi}}^2}. 
\]
see also \cref{eq:34} and \cref{eq:42}. 

\begin{rmk}\label{rmk:due}
Assume that $G=\R/\Z$. In Example~\ref{freq_loc}, we have that
\[
\sigma_\ell = \wh{K}_\ell p_\ell,
\]
whereas in Example~\ref{space_loc}
\[
\sigma_\ell = \wh{K}_\ell \operatorname{sinc}^2(2\pi \delta \ell).
\]
In both cases, $\Sigma$ is a trace-class operator for any choice of
$\wh{K}$.   
\end{rmk}

Since,  both the convolution operator $C_*$ and the ridge regression 
estimator $\Cn^\la$ are bounded operators from $\X$ to $\Y$, an
alternative error measure is 
$$\nor{ \Cn^\la - C_*}_{1,2}.$$
 By Lemma~\ref{lem:convolution_norm}, this norm can also be expressed in terms of the $L^2$ norm between the unknown convolution kernel and it ridge regression estimate, 
\begin{equation}\label{op_err}
\nor{ \Cn^\la - C_*}_{1,2} = \nor{j(\wn^\la-w_*)}_2,
\end{equation}
where we recall that $j$ is the canonical inclusion of the hypothesis
space into  the space of square integrable functions with respect to
the Haar measure of $G$.

By Assumption~\eqref{eq:17}, it is easy to see that
  \begin{alignat*}{1}
    \EE{\nor{\Cn^\la X- C_*X}^2_2\mid\Dn} & \leq
    \left(\sup_{\xi\in\wh{G}} \EE{ \abs{(\F X)_{\xi}}^2}\right)
    \nor{j(\wn^\la-w_*)}_2^2 \\ 
& \leq  \left(\sup_{\xi\in\wh{G}}\EE{
        \abs{(\F X)_{\xi}}^2}\right) 
    \left(\sup_{\xi\in\wh{G}}\wh{K}_\xi \right)
    \nor{\wn^\la-w_*}_\hh^2 \\
& \leq D_K^2 D_X^2 \nor{\wn^\la-w_*}_\hh^2.
  \end{alignat*}
The norm 
$$\nor{\wn^\la-w_*}_\hh$$ 
is yet another error measure which is more stringent than either \cref{eq:56} or \cref{op_err}.

\subsection{Main results}\label{sec:bound}
We next state our main result.  Recall that the eigenvalues of $\Sigma$
are given by 
\[\sigma_\xi= \wh{K}_{\xi} \  \EE{\abs{(\F  X)_{\xi}}^2} \qquad
  \xi\in\wh{G}\]
and set $b^{-1}=0$ if $b=+\infty$. 

\begin{thm}\label{main}
Assume that the positive part of the spectrum of $\Sigma$  is  denumerable,  i.e.\ 
\[
\sigma(\Sigma)\setminus\{0\}= \set{ \sigma_{\xi_\ell} \in (0,+\infty)
  \mid \ell\in I\subset \N}
\]
for some injective map $I\ni \ell \mapsto \xi_\ell \in \wh{G}$. 
Moreover, suppose that, for some $0\leq r\leq 1/2$,  $w_*\in\hh$ satisfies the source condition
\begin{equation}
  \label{eq:source-cond}
  \sum_{\ell\in I}  \dfrac{\abs{(\F w_*)_{\xi_\ell}}^2}{
  \wh{K}_{\xi_\ell}  \sigma_\ell^{2r}}  <
  +\infty 
\end{equation}
and, for some $b\in [1,+\infty]$,  the  family 
$(\sigma_{\xi_\ell})_{\ell\in I} $ satisfies the decay condition 
\begin{alignat}{1}
  \label{eq:69}
  \begin{cases} 
   \displaystyle{\sum_{\ell\in I} } \sigma_{\xi_\ell}    <+\infty  & b=1 \\[15pt]
\sigma_\ell  \lesssim  \frac{1}{\ell^b}  & 1<b < +\infty \\[10pt]
    \operatorname{card}(I)  <+\infty & b=+\infty.    
  \end{cases} \
\end{alignat}
Set
  \begin{equation}
\la_n = \frac{3 \kappa^2}{4} 
\begin{cases}
  n^{-\frac{1}{2r+1+b^{-1}}}  & (r,b)\neq
  (0,+\infty)\\
& \\
   \frac{\ln^2 n }{n} & r=0, b=+\infty 
\end{cases} \qquad  C_n = C_{w_n^{\la_n}} 
\label{eq:98},
\end{equation}
where $\kappa=D_XD_K$. For any $\tau>0$, there exists $n_0=n_0(\tau)$ such  that  for all
$n\geq n_0$, with probability at least $1-3 e^{-\tau}$, 
\begin{equation}
  \label{eq:70}
  \EE{\nor{\Cn^\la X- C_*X}^2_2\mid\Dn} = \nor{\Sigma^{\frac{1}{2}}
    (\wn^{\la_n}-w_*)}^2_\hh \lesssim \max\set{\tau^2,\tau}
  \begin{cases}
    n^{-\frac{2r+1}{2r+1+b^{-1}}} & (r,b)\neq (0,+\infty) \\
   \frac{\ln^2 n }{n}  & r=0, b=+\infty 
  \end{cases}
\end{equation} 
and
 \begin{equation}
\nor{\wn^{\la_n}-w_*}_\hh \lesssim 
 n^{-\frac{r}{2r+1+b^{-1}}}  \qquad  r>0.
\label{eq:error-bound}
\end{equation}
The constants in \cref{eq:70} and \cref{eq:error-bound}  depend only on  $D_X,D_K,M_\eps, \sigma_\eps, r,
b,\nor{\Sigma}_{\hh,\hh}$ and $\nor{w^*}_\hh$, and can be derived
explicitly as well as $n_0(\tau)$. 
\end{thm}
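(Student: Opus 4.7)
The plan is to recast the problem as ridge regression in the Hilbert space $\hh$, where, thanks to the Fourier parametrization \cref{eq:66}, the population covariance $\Sigma$ is simultaneously diagonalized with the hypothesis-defining operator. Concretely, for each $x\in L^1$ write $C_x:\hh\to L^2$, $C_x w = x\conv j(w)$, and define
\begin{equation*}
\Sigma = \EE{C_X^* C_X}, \qquad \Sigma_n = \frac{1}{n}\sum_{i=1}^n C_{X_i}^* C_{X_i}, \qquad h_n = \frac{1}{n}\sum_{i=1}^n C_{X_i}^* Y_i.
\end{equation*}
Then $\wn^\la = (\Sigma_n+\la)^{-1} h_n$ and, using the noise assumption $\EE{h_n}=\Sigma w_*$ together with the diagonalization of $\Sigma$ on $\{f_{\xi_\ell}\}_{\ell\in I}$ with eigenvalues $\sigma_{\xi_\ell}$, the problem has exactly the structure of kernel ridge regression in \cite{caponnetto2007optimal,rucaro15}.

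Next I would use the standard approximation/sample error decomposition around the population regularized minimizer $w^\la = (\Sigma+\la)^{-1}\Sigma w_*$:
\begin{equation*}
\wn^\la - w_* = \underbrace{(\Sigma_n+\la)^{-1}\bigl(h_n - \Sigma_n w^\la\bigr)}_{\text{sample term}} + \underbrace{w^\la - w_*}_{\text{approximation term}}.
\end{equation*}
Under the source condition \cref{eq:source-cond}, which is equivalent to $w_*\in \ran{\Sigma^r}$ with an element of controlled $\hh$-norm, classical spectral calculus yields $\|\Sigma^{1/2}(w^\la-w_*)\|_\hh\lesssim \la^{r+1/2}$ and $\|w^\la-w_*\|_\hh\lesssim \la^r$ (valid for $0\le r\le 1/2$, so no saturation needs to be addressed). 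For the sample term I would insert the pivots $(\Sigma+\la)^{1/2}$ and $(\Sigma+\la)^{-1/2}$, bound the operator $(\Sigma+\la)^{1/2}(\Sigma_n+\la)^{-1}(\Sigma+\la)^{1/2}$ by a constant on the event that $\|(\Sigma+\la)^{-1/2}(\Sigma-\Sigma_n)(\Sigma+\la)^{-1/2}\|\le 1/2$, and then apply Bernstein's inequality in Hilbert spaces to the two empirical averages, using the almost sure bound $\nor{C_X^*C_X}_{\hh,\hh}\le \kappa^2=D_X^2D_K^2$ (this is where \cref{eq:17}, \cref{eq:31} and $\|x\conv y\|_2\le \|x\|_1\|y\|_2$ come in) and the noise moment condition \cref{eq:16}.

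The effective dimension $\mathcal N(\la)=\tr{\Sigma(\Sigma+\la)^{-1}}=\sum_{\ell\in I}\sigma_{\xi_\ell}/(\sigma_{\xi_\ell}+\la)$ governs the variance of these concentration bounds. The three cases of the capacity condition \cref{eq:69} give $\mathcal N(\la)\lesssim 1$, $\mathcal N(\la)\lesssim \la^{-1/b}$, $\mathcal N(\la)\le \operatorname{card}(I)$ respectively, with the logarithmic factor appearing only in the genuinely finite-dimensional regime $b=+\infty$ after optimizing over $\la$. Combining the bias and variance estimates yields, with probability at least $1-3e^{-\tau}$,
\begin{equation*}
\nor{\Sigma^{1/2}(\wn^\la-w_*)}_\hh^2\lesssim \max\{\tau^2,\tau\}\Bigl(\la^{2r+1}+\frac{1+\mathcal N(\la)}{n\la}\Bigr), \qquad \nor{\wn^\la-w_*}_\hh^2\lesssim \max\{\tau^2,\tau\}\Bigl(\la^{2r}+\frac{\mathcal N(\la)}{n\la^{2}}\Bigr),
\end{equation*}
and balancing the two contributions with the choice \cref{eq:98} produces the stated rates \cref{eq:70} and \cref{eq:error-bound}.

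The main technical obstacle is controlling the sample term uniformly in the two target norms while keeping track of the correct dependence on $\mathcal N(\la)$ in each of the three capacity regimes; in particular, verifying that the event $\|(\Sigma+\la)^{-1/2}(\Sigma-\Sigma_n)(\Sigma+\la)^{-1/2}\|\le 1/2$ holds for $n\ge n_0(\tau)$ with the advertised probability requires an operator Bernstein bound with variance proxy $\mathcal N(\la)/(n\la)$, which dictates the threshold $n_0$. The analysis never uses the reproducing property, so the non-RKHS case $\wh K\notin \ell^1$ is handled uniformly, and all constants can be traced explicitly to $D_X,D_K,M_\eps,\sigma_\eps,r,b,\nor{\Sigma}_{\hh,\hh},\nor{w_*}_\hh$ as claimed.
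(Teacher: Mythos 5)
Your overall strategy is essentially the one the paper follows: recast the problem via the feature map $\Phi(x)w=x\conv j(w)$, identify $\Sigma=\EE{\Phi(X)^*\Phi(X)}$ and $\Tn$, insert the pivots $(\Sigma+\la\Id)^{\pm 1/2}$, control $(\Sigma+\la\Id)^{1/2}(\Tn+\la\Id)^{-1}(\Sigma+\la\Id)^{1/2}$ on the event $\nor{(\Sigma+\la\Id)^{-1/2}(\Sigma-\Tn)(\Sigma+\la\Id)^{-1/2}}\le 1/2$ via an operator Bernstein inequality, bound the noise term by a Hilbert-space Bernstein inequality, read the source condition as $w_*=\Sigma^r v^*$, and bound the effective dimension $\mathcal N(\la)=\tr{(\Sigma+\la\Id)^{-1}\Sigma}$ from the capacity condition. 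The only genuinely different choice is the decomposition: you split around the population regularized minimizer $w^\la$ (Caponnetto--De Vito style), whereas the paper uses the direct identity $\wn^\la-w_*=(\Tn+\la\Id)^{-1}(\Sn^*\boldsymbol\eps-\la w_*)$ from \cite{rucaro15}, which in the well-specified setting avoids the extra cross term $(\Tn-\Sigma)(w_*-w^\la)$ that your route must additionally control. Note also that your displayed identity is not quite right as written: it should read $\wn^\la-w_*=(\Sigma_n+\la)^{-1}\bigl(h_n-(\Sigma_n+\la)w^\la\bigr)+(w^\la-w_*)$, i.e.\ the term $-\la w^\la$ is missing.

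There are, however, quantitative errors that would prevent your final step from producing the stated rates. Your variance terms carry a spurious factor $1/\la$: the correct sample-error contributions are $\nor{(\Sigma+\la\Id)^{-1/2}\Sn^*\boldsymbol\eps}_\hh\lesssim \tau/(\sqrt{\la}\,n)+\sqrt{\tau\,\mathcal N(\la)/n}$, so after squaring the prediction-norm bound is $\la^{2r+1}+\tau^2/(n^2\la)+\tau\,\mathcal N(\la)/n$ (not $(1+\mathcal N(\la))/(n\la)$) and the $\hh$-norm bound is $\la^{2r}+\tau^2/(n^2\la^2)+\tau\,\mathcal N(\la)/(n\la)$ (not $\mathcal N(\la)/(n\la^2)$). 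Balancing your versions against $\la^{2r+1}$ and $\la^{2r}$ would force $\la\sim n^{-1/(2r+2+b^{-1})}$ and yield strictly slower rates than \cref{eq:70} and \cref{eq:error-bound}; with the corrected variance terms the choice \cref{eq:98} balances exactly as claimed. Two further points: for $b=1$ the effective dimension satisfies $\mathcal N(\la)\lesssim \tr{\Sigma}/\la=\la^{-1}$, not $\mathcal N(\la)\lesssim 1$ as you list for the first case of \cref{eq:69}; and the $\ln^2 n$ factor in the case $r=0,\ b=+\infty$ does not come from optimizing $\la$ but from the constraint $\la\gtrsim (\kappa^2/n)\ln(n\tr{\Sigma}/(\delta\nor{\Sigma}_{\hh,\hh}))$ needed for the concentration event to hold (cf.\ \cref{eq:28}), which is violated by the otherwise optimal choice $\la\sim 1/n$; for $r>0$ and $b=+\infty$ no logarithm is needed.
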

The proof of the above results together with intermediate results are given in the appendix.
We add a few observations. 
Note that if $r=0, b=+\infty$, a log factor is needed to ensure that
$\la=\la_n$ satisfies \cref{eq:28}, which is crucial to ensure that
$\nor{(\Sigma+\la \Id )^{\frac{1}{2}} (\Tn+\la \Id
  )^{-\frac{1}{2}}}_{\hh,\hh}$ is bounded, see \cref{eq:22}.  If $r=0$ there is no apriori condition on $w_*$, and one
can not have rate for $\nor{\wn^\la-w_*} _\hh$. If $b=1$, there is no
condition on the decay of the eigenvalues of $\Sigma$.  

Note that if $G$ is a second countable group, as in almost all
examples, $\sigma(\Sigma)\setminus\{0\}$ is  always denumerable.  Condition~\eqref{eq:69}  for
$b>1$ and $b=\infty$ implies that the decay condition for $b=1$, 
which is equivalent to assume that $\Sigma$ is a trace class operator.
The last condition on $\Sigma$  implies that
$\sigma(\Sigma)\setminus\{0\}$ is  always denumerable.  If the
hypothesis space $\hh$ is
finite dimensional, condition~\eqref{eq:69} always holds true with~$b=+\infty$.

 \subsection{A priori assumptions and space/frequency localization}\label{example_bound}
 
 In this section, we discuss how the regularity assumptions required to derive the learning bounds can be related to the space/frequency localization properties of the input signals.  
Toward this end, we  specialize and illustrate the above  result in the context of  Example~\ref{freq_loc}  and
Example~\ref{space_loc} where 
\begin{itemize}
  \item[$\bullet$]  $G=\R/\Z=[0,1]$ 
  \item[$\bullet$] $\wh{K}\simeq 1/\abs{\ell}^2$ so that $\hh=H^1$
  \item[$\bullet$] $w_*\in H^2$ so that 
\[
\sum_{\ell\in\Z} \abs{(\F w_*)_\ell }^2 \ell^4  
<+\infty
\]
\item[$\bullet$] $\sigma_\ell = \wh{K}_\ell \EE{ \abs{(\F X)_\ell}^2}$
  \end{itemize}
so that the source condition~\eqref{eq:source-cond} can be written as,
\[
\sum_{\ell\in\Z} \dfrac{\abs{(\F w_*)_\ell }^2 \ell^2}{ \sigma_\ell^{2r}}
<+\infty.
\]
If $X$ is well localized in the frequency domain (as in Example~\ref{freq_loc}), then by Remark~\ref{rmk:due}
\begin{equation} \label{sigma_freq-loc}
\sigma_\ell \simeq \abs{\ell}^{-2} p_\ell  \to
\abs{\ell}^{-3} \qquad p_\ell \to \ell^{-1} .
\end{equation}
In this limit, we have that
\begin{alignat*}{1}
     &  b  = 3 \\
      & r = 1/3 \\
      & \frac{2r+1}{2r+1+b^{-1}}  =  \frac{5}{6} \\
      &  \frac{r}{2r+1+b^{-1}}  = \frac{1}{6} \,.
  \end{alignat*}
If $X$ is well localized in the space domain (as in Example~\ref{space_loc}), then by Remark~\ref{rmk:due}
\begin{equation} \label{sigma_space-loc}
\sigma_\ell \simeq \wh{K}_\ell \operatorname{sinc}^2(2\pi \delta \ell) \simeq \abs{\ell}^{-2}
    \operatorname{sinc}^2(2\pi\delta\ell)\  \operatornamewithlimits{\longrightarrow}_{\delta\to 0} \ \abs{\ell}^{-2} ,
\end{equation}
so that
\begin{alignat*}{1}
     &  b  =2 \\
      & r = 1/2 \\
      & \frac{2r+1}{2r+1+b^{-1}}  =  \frac{4}{5} \\
      &  \frac{r}{2r+1+b^{-1}}  = \frac{1}{5}.
  \end{alignat*}
Note that, if the aim is to recover the vector $w_*\in \hh$, it  is
better to select  sampling signals that are well-localized in space, as
expected. However, if  goal is to recover the convolution operator
$C_*=\cdot \conv w_*$ to  predict new outputs,  it is
better select  sampling signals that are well-localized in frequency.

\subsection{Discussion}

The bounds  in Theorem~\ref{main} provide a quantitative analysis of the learning error for ridge regression in the context of convolution operators. The results highlight the interplay between the decay properties of the eigenvalues of the covariance operator \(\Sigma\),  which describes the distribution of the input signals, and  the regularity condition on the convolution kernel \(w_*\).

The non-asymptotic bounds~\eqref{eq:70} and~\eqref{eq:error-bound} depend on two parameters: the regularity \(r\), which characterizes the smoothness of \(w_*\) through the source condition~\eqref{eq:source-cond}, and the decay rate \(b\) of the eigenvalues \(\sigma_\xi\), as described in \cref{eq:69}. When both \(r\) and \(b\) are large, faster rates are achieved, reflecting the favorable interaction between the smoothness of the target and input distribution.
 Notably, the derived rates match similar sharp bounds for  nonparametric regression  with kernel methods and linear operator  learning. The results on the  prediction norm bound~\eqref{eq:56} could be derived from results on operator learning, while the  norm bound~\eqref{eq:70} do not have a direct operator learning analogous. 
Both estimates can be derived with a unified analysis exploiting the convolution operator properties. 

The implications of these results are twofold. First, the bounds reinforce the importance of regularity assumptions in achieving efficient learning, emphasizing how they interact with sample size and the geometry of the hypothesis space. Second, the derived error rates provide a theoretical justification for practical applications of ridge regression in structured settings, including learning Green functions or blind deconvolution, as discussed in Section~\ref{sec:related-work}.

\section{Numerical simulations}\label{sec:experiments}

In this section, we provide numerical illustrations of the theoretical results in Theorem~\ref{main}. We investigate how different types of input signal localization (in space vs.\ frequency) affect the estimation of convolution operators considering different accuracy metrics. We then present an application to partial differential equations, namely the recovery of a fundamental solution of the heat equation. This example goes beyond the theoretical results we stated in this paper since the group is $\R$, but it further illustrates the potential of our approach.

\subsection{Error behavior and localization}
\begin{figure}[t]
    \centering
    \begin{subfigure}[b]{0.48\textwidth}
        \centering
        \includegraphics[width=\textwidth]{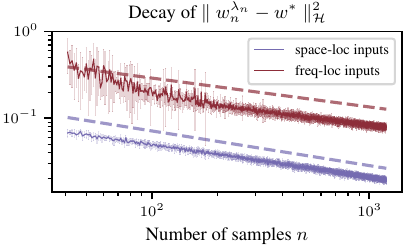}
    \end{subfigure}
    \begin{subfigure}[b]{0.48\textwidth}
        \centering
        \includegraphics[width=\textwidth]{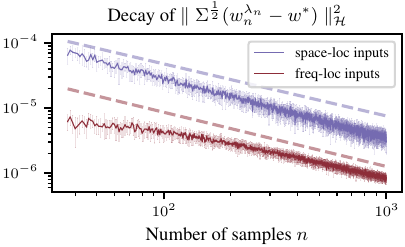}
    \end{subfigure}
    \caption{\textbf{Error decay.} (Left)  \(\|w_n^\lambda-w_*\|^2_\hh\). (Right)   \(\|\Sigma^{\frac{1}{2}}(w_n^\lambda-w_*)\|_\hh^2\). Each curve compares frequency-localized vs.\ space-localized inputs. Dotted lines indicate the theoretical convergence rates for reference.}
   \label{fig:error}
\end{figure}

Before detailing our experimental settings, we briefly recall the main objective, namely we wish to  have an experimental evidence of the asymptotic error decay predicted by Theorem~\ref{main} under different types of input-signal localizations (in space vs.\ frequency).
\paragraph{\textbf{Setup}}
We consider the reconstruction of the convolution operator
$C_*:x \mapsto C_* x = x\conv j(w_*)$,
on the one-dimensional torus \(G=\R/\Z=[0,1]\). 
We are given i.i.d.\ samples $Y_k = C_* X_k + \varepsilon_k, k=1,\dots,n$, and aim to estimate the convolution kernel $w_*$ (and hence $C_*$) from these samples, as described in Section~\ref{sec:stat_model}, where:
\begin{itemize}
    \item The hypothesis space $\hh$ is the periodic Sobolev space $H^1$ with kernel $\wh{K}\simeq 1/\abs{\ell}^2$ (see \cref{eq:8bis,eq:9}).
    \item The target $w_\ast$ is more regular, it lies in $H^2$, implying $\sum_{\ell\in\Z} |(\F w_*)_\ell|^2\,\ell^4 <+\infty.$ 
    We further amplify its high-frequency components via a suitable factor and randomize their phases to avoid trivial cases and challenge the reconstruction.
    \item We corrupt each output $C_\ast X_k$ with additive Gaussian noise $\varepsilon_k$ of zero mean, with variance set so that the noise level is around 45\% of the signal peak. The Gaussian noise clearly satisfies \cref{eq:16}.
\end{itemize}
All functions are discretized on a grid of $2^9$ points for the FFT-based computations, following \cref{eq:66} for the ridge regression estimator $w_n^\lambda$.

\paragraph{\textbf{Input localization scenarios.}} 
As mentioned in Section~\ref{sec:stat_model}, the distribution of the input signals $X_k$ plays an important role in determining the learning rates. We thus compare two contrasting input localizations:

\begin{itemize}
    \item \textit{Frequency-localized inputs} (cf. Example~\ref{freq_loc}): The input signals are given by
    \[
    X_k(t) = e^{2\pi i \ell_k t},\quad k=1,\dots,n,
    \]
    where each frequency $\ell_k \in \mathbb{Z}$ is drawn from a power-law distribution \(p_\ell\propto |\ell|^{-\alpha}\) with $\alpha=1$.
    \item \textit{Space-localized inputs} (cf. Example~\ref{space_loc}): The input signals are given by
    \[
    X_k(t)=\frac{1}{2\delta}\,\mathbbm{1}_{\{|t-\tau_k|\le \delta\}},\quad k=1,\dots,n,
    \]
    with each \(\tau_k\) uniformly sampled on the torus, and $\delta=0.002$. Here, the distance is computed as the circular distance on the torus.
\end{itemize}
The specific choices of $\alpha$ and  $\delta$ align with the conditions in \cref{sigma_freq-loc} and \cref{sigma_space-loc}. 

\paragraph{\textbf{Implementation details.}} 
For each scenario (frequency vs.\ space localization), our numerical workflow is as follows:
\begin{enumerate}
    \item[1.] \emph{Generate data}: Draw $\{X_k\}_{k=1}^n$ according to the chosen localization distribution and compute $Y_k = C_* X_k + \varepsilon_k, k=1,\dots,n$.
    \item[2.]  \emph{Estimate $w_\ast$}: Use the ridge regression formula in the Fourier domain (cf.\ \cref{eq:66}) to compute $w_n^{\lambda_n}$. We select $\la$ via grid search over the logspace interval
$$
\lambda \in \sigma_{\max}\cdot\{10^{-3},\dots,10^{-1}\},\quad \text{where}\quad \sigma_{\max}=\max_\ell\left\{\wh{K}_\ell\frac{1}{n}\sum_{k=1}^n |(\F X_k)_\ell|^2\right\}.
$$
This heuristic selection ensures that $\la$ is scaled appropriately to the empirical covariance structure.
 We use a grid search for the optimal $\lambda$, instead of~the theoretical a a-priori choice \cref{eq:98}, to be closer to real applications where the parameters $r,b$ are unknown.
 \item[3.]\emph{Evaluate errors}:
    For each sample size $n$, we compute the errors \(\|w_n^\lambda-w_*\|^2_\hh\) and \(\|\Sigma^{\frac{1}{2}}(w_n^\lambda-w_*)\|_\hh^2\).
    \item[4.]\emph{Repeat} for increasing values of $n$ and average over multiple runs to measure sampling variability. 

\end{enumerate}

\paragraph{\textbf{Results and discussion.}}

\begin{figure}[t]
  \centering
  \includegraphics[height=5cm]{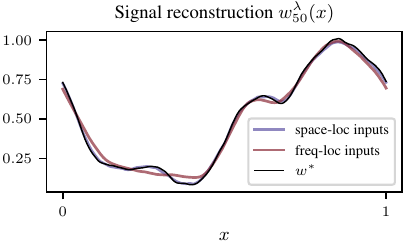}
  \caption{\textbf{Example of reconstruction.} Comparison of the true $w_\ast$  with the estimated $w_n^{\lambda_n}$ for $n=50$ in both input scenarios.}
  \label{fig:signals}
\end{figure}

Figure~\ref{fig:error} displays  the error decay for both \(\|w_n^\lambda-w_*\|^2_\hh\) (left panel) and\(\|\Sigma^{\frac{1}{2}}(w_n^\lambda-w_*)\|_\hh^2\) (right panel) as a function of the sample size $n$ in the two localization scenarios, averaged over nine independent trials per $n$. 
For reference, the theoretical convergence rates from \cref{eq:70} and \cref{eq:error-bound} (up to multiplicative constants) are also displayed. 
For space-localized inputs (right panel, Figure~\ref{fig:error}), we expanded the grid search range for the regularization parameter $\lambda$ to include smaller values (starting from $10^{-4}$). Moreover, since the magnitude of the Fourier coefficients satisfies $\abs{\wh{X}(t)_\ell} = \text{sinc}(2\pi \delta \ell)\simeq 1$ only for frequencies $|\ell|\ll \frac{1}{\delta}$, we restricted the summation in the empirical mean to this frequency range. In the same right panel, for frequency-localized inputs, we used a coarser grid consisting of $2^{12}$ points.
In Figure~\ref{fig:signals}, we illustrate a reconstruction of the target using \(n=50\) in both input scenarios.\\

As predicted by the theory (see also the discussion in Section~\ref{example_bound}), space-localized inputs lead to faster convergence in  \(\|w_n^\lambda-w_*\|_\hh\). Hence, if the primary goal is to reconstruct the convolution kernel $w_\ast$ itself, space-localized sampling is advantageous. Conversely, if the objective is to  approximate the convolution operator \(C_* = \cdot \conv w_*\) (as measured by the prediction error \(\|\Sigma^{\frac{1}{2}}(w_n^\lambda-w_*)\|_\hh^2\)), then frequency-localized inputs yield better results.

\noindent\textit{Implementation note.} The code is written in Python using FFT-based operations for fast convolution and Fourier transforms. The scripts are available at \href{https://github.com/EmiliaMagnani/learnconv}{https://github.com/EmiliaMagnani/learnconv}.

\subsection{Application to partial differential equations: heat kernel approximation} 

In this section, we illustrate how the proposed method can be used to learn a fundamental solution (Green's function) for a classic PDE, namely the heat equation. Concretely, consider the initial value problem 
\begin{equation}\label{eq:heatIVP}
\begin{split}
&\frac{\partial u}{\partial t}(x, t) - \Delta u(x, t) = 0  \quad  (x, t) \in  \mathbb{R}^D \times (0,\infty), \\
&u(x, 0) = g(x)   \quad  x \in \mathbb{R}^D .
\end{split}
\end{equation}
The function $u(x, t)$ represents the temperature at location $x\in  \mathbb{R}^D $ 
and time $t \geq0$. 
The function $g$ specifies the temperature at $t = 0$.  
It is well known (see \cite{salsa2016partial}) that the solution  can be written as a convolution
\begin{equation} \label{eq:heat_equation}
u(x,t) = (H(\cdot,t) \ast g) (x) = \int_{\mathbb{R}^D} H(x-y,t)g(y) dy,
\end{equation}
where $H(\cdot,t)$ is the heat (Gaussian) kernel 
\begin{equation}
H(x,t) = \frac{1}{(4 \pi t)^{D/2}}  e^{-\frac{|x|^2}{4 t}}.
\end{equation}
For a fixed  $t_\ast>0$, our goal is to reconstruct the  kernel  
$$w^*(x) = H(x, t_\ast)$$
from noisy input--output data $(g_i, u_i + \varepsilon_i)_{i=1}^n$, where
each $u_i$ is obtained via \cref{eq:heatIVP} with initial condition $g_i$ and $\varepsilon_i$ models additive noise.  As already observed, in this example the group $G$ is $\R^D$, so that our theory does not apply directly.

\paragraph{\textbf{Discrete Convolution Setup.}}  

\begin{figure}[t]
  \centering
  \begin{minipage}[]{0.48\textwidth}
    \includegraphics[width=\textwidth]{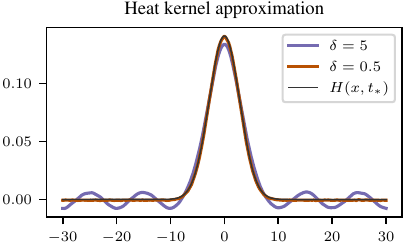}
    \caption{Heat kernel reconstruction after $n=15$ input samples for different values od $\delta$.}
    \label{fig:heat_kernel_reconstruction}
  \end{minipage}%
  \vspace{1cm}
  \begin{minipage}[]{0.48\textwidth}
    \includegraphics[width=\textwidth]{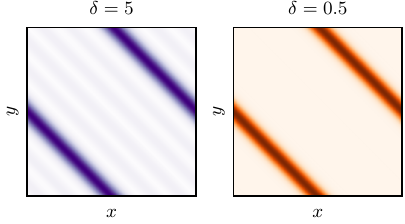}
    \caption{Corresponding convolution operators (circulant matrices).}
  \label{fig:heat_operator_reconstruction}
  \end{minipage}
\end{figure}

We restrict our experiments to a one-dimensional setting ($D=1$) and  we discretize the space variable $x$ by replacing $\R$ with $\mathbb{Z}_N$, effectively imposing boundary conditions.   This allows us to estimate the Gaussian kernel using \cref{eq:18} (cf. Example~\ref{example:circulant}).
The true heat kernel $w_\ast$
is thus represented as a vector, and the associated convolution operator $C_\ast : x \mapsto x\conv w^\ast $
is a circulant matrix (cf.\ \cref{circulant}).
In our numerical study, we choose 
\begin{itemize} 
\item A grid size $N=2^{11}$. 
\item A fixed time  $t_\ast = 3$. 
\item Input functions  $(g_i)_{i=1}^n$ given by normalized, shifted step functions—each supported on an interval of length $\delta>0$ (cf.\ \cref{space_loc})—with shifts chosen at random. 
\item Noisy outputs $u_i = g_i \conv w_\ast$ corrupted by additive noise of level $0.001$.
\end{itemize} 
In this framework the learning problem is to estimate $w^\ast$ from the data $\{(g_i,u_i+\varepsilon_i)\}_{i=1}^n$ using the ridge regression estimator in \cref{eq:66}.  For the hypothesis space we use a a reproducing kernel Hilbert space with kernel with exponential decay from \cref{eq:10} with decay $\ell=2$ and $b=1.5$. 

\paragraph{\textbf{Results and discussion}}
Figure~\ref{fig:heat_kernel_reconstruction} shows reconstructions of the heat kernel obtained from $n=15$ training samples for two values of the localization parameter  $\delta$. 
As $\delta$ decreases, the input functions $g_i$ become more “impulse-like” (approaching a discrete Dirac delta), and reveals  more direct information about $w^\ast$.
 Indeed, the reconstruction improves when $\delta$ is small. In parallel, 
Figure~\ref{fig:heat_operator_reconstruction} 
displays the estimated convolution operators (represented as circulant matrices) associated with the reconstructions. 

This experiment illustrates how our approach can be used to learn PDE solution operators. In particular, when the input functions are sharply localized in space, the recovery of the underlying convolution kernel is significantly enhanced, even with only a few training samples. The present one-dimensional setting can be generalized to higher dimensions or more complex boundary conditions by appropriately choosing the underlying group $G$ and discretization.

\section{Conclusion}  
This work provides a theoretical analysis of the problem of learning convolution operators from a finite set of input-output pairs. The case of convolution operators defined by compact Abelian groups is considered. A tailored analysis allows us to derive sharp error estimates in different norms and highlight the role of localization.

There are several avenues for future extensions of the presented study. Among others, we mention considering different regularity assumptions on the convolution kernel. Moreover,  it would be interesting to study learning of convolution operators for locally compact Abelian groups, which we plan to address in future research.

\section*{Acknowledgments}
 LR acknowledges the Center for Brains, Minds and Machines (CBMM), funded by NSF STC Award CCF-1231216. LR acknowledge the Ministry of Education, University and Research (Grant ML4IP R205T7J2KP). LR  acknowledges the European Research Council (Grant SLING 819789), the US Air Force Office of Scientific Research (FA8655-22-1-7034).  The research by EDV and LR has been supported by the MIUR Grant PRIN 202244A7YL and the MUR PNRR project PE0000013 CUP J53C22003010006 'Future Artificial Intelligence Research (FAIR)'. The research by EDV  has been supported by the MUR Excellence Department Project awarded to Dipartimento di Matematica, Universit\`a di Genova, CUP D33C23001110001. ED is a member of the Gruppo Nazionale per l’Analisi Matematica, la Probabilit\`a e le loro Applicazioni (GNAMPA) of the Istituto Nazionale di Alta Matematica (INdAM). This work represents only the view of the authors. The European Commission and the other organizations are not responsible for any use that may be made of the information it contains.
 EM and PH gratefully acknowledge financial support by the European Research Council through ERC CoG Action 101123955 ANUBIS ; the DFG Cluster of Excellence “Machine Learning - New Perspectives for Science”, EXC 2064/1, project number 390727645; the German Federal Ministry of Education and Research (BMBF) through the Tübingen AI Center (FKZ: 01IS18039A); the DFG SPP 2298 (Project HE 7114/5-1), and the Carl Zeiss Foundation, (project "Certification and Foundations of Safe Machine Learning Systems in Healthcare"), as well as funds from the Ministry of Science, Research and Arts of the State of Baden-Württemberg.   EM and PH thank the International Max Planck Research School for Intelligent Systems (IMPRS-IS) for supporting Emilia Magnani.

\newpage

\appendix

\section{Proofs}\label{proof}

\subsection{Functional tools}
We introduce a few operators useful in the proof. 

Recalling that $\set{e_\xi}_{\xi\in\wh{G}}$ is the Fourier basis of
$L^2$ and $\wh{K}\in\ell^\infty$, we denote by  $ \M : L^2\to L^2$ the
operator whose eigenvectors are $\set{e_\xi}_{\xi\in\wh{G}}$ and the
corresponding eigenvalues are $\set{\wh{K}_\xi}_{\xi\in\wh{G}}$,  i.e.\
\[
\M\, e_\xi = \wh{K}_\xi \, e_\xi\qquad \xi\in\wh{G}.
\]
Since $(\wh{K}_\xi)_{\xi\in\wh{G}}$ is a bounded positive family
  by \cref{eq:31} , clearly $\M$ is a bounded
positive operator.  We have the following result. 
\begin{lem}
With the above notations 
\begin{equation}
  \label{eq:25}
  jj^*= \M \qquad
  \nor{j}_{\hh,L^2}=\nor{j^*}_{L^2,\hh}=\nor{\M}^{1/2}_{2,2}\leq D_K \,.
\end{equation}
\end{lem}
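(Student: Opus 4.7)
The plan is to compute $j^{*}$ explicitly in Fourier coordinates, using the orthonormal basis of $\hh$ given in Remark~\ref{rmk:basis}, and then read off $jj^{*}$ directly.

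First I would unpack the embedding. By \cref{eq:58}, $\{f_\xi\}_{\xi\in\wh{G}_*}$ with $f_\xi=\wh{K}_\xi^{1/2}e_\xi$ is an orthonormal basis of $\hh$, and $j(f_\xi)=\wh{K}_\xi^{1/2}e_\xi\in L^2$. For a general $w\in\hh$, expanding in $\{f_\xi\}$ and using \cref{eq:71bis} shows that the Fourier coefficients of $j(w)$ are $(\F j(w))_\xi=(\F w)_\xi$ for $\xi\in\wh{G}_*$ and vanish otherwise, so $j$ is essentially the identity on spectrum.

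Next I would determine $j^*:L^2\to\hh$ via the defining identity $\langle j(w),y\rangle_2=\langle w,j^*(y)\rangle_\hh$ for all $w\in\hh$, $y\in L^2$. Plugging in $w=f_\xi$ and using $\langle y,e_\xi\rangle_2=(\F y)_\xi$ from \cref{eq:73}, together with \cref{eq:71bis}, gives
\begin{equation*}
  (\F j^*(y))_\xi=\wh{K}_\xi(\F y)_\xi,\qquad \xi\in\wh{G}_*,
\end{equation*}
and $(\F j^*(y))_\xi=0$ otherwise (which matches $\wh{K}_\xi=0$ automatically). Composing and using that $j$ acts as the identity on Fourier coefficients, we get $(\F(jj^*)y)_\xi=\wh{K}_\xi(\F y)_\xi$ for every $\xi\in\wh{G}$, which is precisely the action of $\M$ on the Fourier basis. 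Hence $jj^*=\M$.

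For the norms, I would invoke the standard Hilbert-space identity $\nor{j}^2_{\hh,L^2}=\nor{jj^*}_{2,2}=\nor{\M}_{2,2}$ and $\nor{j^*}_{L^2,\hh}=\nor{j}_{\hh,L^2}$. Since $\M$ is diagonal on the Fourier basis with eigenvalues $\wh{K}_\xi\ge 0$, $\nor{\M}_{2,2}=\sup_{\xi\in\wh{G}}\wh{K}_\xi$, and the bound \cref{eq:31} gives $\nor{\M}_{2,2}\le D_K^2$, so $\nor{\M}_{2,2}^{1/2}\le D_K$. There is no real obstacle here; the only mild care point is keeping track of the vanishing of $(\F w)_\xi$ outside $\wh{G}_*$ when passing from $\hh$-expansions to Fourier expansions in $L^2$, which is handled by the convention in \cref{eq:8} and by using the basis $\{f_\xi\}_{\xi\in\wh{G}_*}$ rather than $\{e_\xi\}_{\xi\in\wh{G}}$.
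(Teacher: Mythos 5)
Your proposal is correct and follows essentially the same route as the paper: both compute $j^{*}$ explicitly on the bases $\{f_\xi\}_{\xi\in\wh{G}_*}$ and $\{e_\xi\}_{\xi\in\wh{G}}$ (you phrase it via Fourier coefficients, the paper via basis expansions, which is the same computation) and then read off $jj^{*}=\M$ and the norm bound from \cref{eq:31}. No gaps.
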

\begin{proof}
By Remark~\ref{rmk:basis}
\[
j w = \sum_{\xi\in\wh{G}_*} \wh{K}_\xi^{\frac 12}
\scal{w}{f_\xi}_{\hh} \, e_\xi  \qquad w\in\hh,
\]
so that 
\[
j^* y = \sum_{\xi\in\wh{G}_*} \wh{K}_\xi^{\frac 12}
\scal{y}{e_\xi}_{2} \, f_\xi  \qquad y\in L^2,
\]
and
\[
j j^* y  =\sum_{\xi\in\wh{G}_*} \wh{K}_\xi \scal{w}{e_\xi}_{\hh} \,
e_\xi  = \M y .
\]
Hence, \cref{eq:25} easily follows, where the inequality is due to \cref{eq:31}.
\end{proof}
Note that $\M$ commutes with the translations $T_t$, $t\in G$, 
and any bounded positive translational invariant operator is of a such
form. 
\begin{rmk}\label{l1}
If $\wh{K}=\F K$ for some $K\in L^1$, then by convolution theorem, see \cref{eq:2},
    \begin{equation}
    \M y = K\conv y = jj^*y,\label{eq:20}
  \end{equation}
  so that $\M$ is the convolution operator by $K$.
\end{rmk}

\begin{rmk}\label{rkhs} 
  Learning convolution operator can be seen as a vector valued
  regression problem. Indeed, consider the following vector
  valued feature map
  \begin{equation}
    \Phi: \X \to \mathcal B(\hh,\Y) \qquad \Phi(x)w=x\conv
    j(w),\label{eq:95}
  \end{equation}
  which is well defined by \cref{eq:96}. Define the corresponding
  vector valued 
  reproducing kernel Hilbert space of functions from $\X$ into $L^2$
  parametrised by $\hh$, i.e.\
\[
\widetilde{\hh} = \set{f:\X\to L^2\mid f(x)=\Phi(x)w=x\conv j(w)}.
\]
Then $\widetilde{\hh}$ is, as a vector space, the  reproducing kernel Hilbert
space with vector valued reproducing kernel 
\[
\X\times \X\ni (x,x') \mapsto \Phi(x')\Phi(x)^*\in B(\Y,\Y) 
\]
and the map 
\[
\hh\in w \mapsto \Phi(\cdot)w\in \widetilde{\hh}
\]
is an isometry from $\hh$ onto $\widetilde{\hh}$ \cite{cadeto06}. In
this framework, the estimator $C_n^\la$ defined
by \cref{eq:32} is the ridge regression estimator on the hypotesis
space $\widetilde{\hh}$, whose elements $f$ are convolutions 
operators from $\X$ to $\Y$. 
\end{rmk}
In this paper, we do not explicitly use the framework of reproducing kernel Hilbert
spaces, however the map $\Phi$ plays a central role to prove our results. The following proposition recalls some basic properties of
$\Phi$. Recall that $\set{e_\xi}_{\xi\in \wh{G}}$ is the (Fourier
basis)  of 
$L^2$ and $\set{f_\xi}_{\xi\in \wh{G}_*}$, defined by \cref{eq:58}, is a base of $\hh$. 
\begin{prop}
For all $x\in L^1$
\begin{alignat}{2}
  &\Phi(x)^*: \Y \to \hh && \qquad \Phi^*(x) y = j^* (\wc{x}\conv y) \label{eq:24}\\
  & \Phi(x)^*\Phi(x):\hh\to \hh && \qquad  \Phi(x)^*\Phi(x) w=j^*(\wc{x} \conv x \conv j(w)) \label{eq:26}
\end{alignat}
Moreover,
  \begin{alignat}{2}
    \Phi(x) f_\xi & = \wh{K}_\xi^{\frac{1}{2}}\, (\F x)_\xi \, e_\xi
    &&\qquad
    \xi\in\wh{G}_*\label{eq:59} \\
    \Phi(x)^* e_\xi & = \wh{K}_\xi^{\frac{1}{2}}\, \overline{(\F
      x)_\xi} \, f_\xi &&\qquad
    \xi\in\wh{G} \label{eq:61} \\
    \Phi(x)^*\Phi(x) f_\xi & =\wh{K}_\xi\, \abs{(\F x)_\xi}^2 f_\xi
    &&\qquad \xi\in\wh{G}_* \label{eq:62}.
  \end{alignat}
Finally, the map $\Phi$ is a Lipschitz function  from $\X$ into
$B(\hh,\Y)$, i.e.\
\begin{equation}
  \label{eq:97}
 \nor{(\Phi(x) -\Phi(x')}_{\hh,L^2} \leq D_K \nor{x - x'}_1\,.
\end{equation}
\end{prop}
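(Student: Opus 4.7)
The proof is essentially a sequence of direct computations that combine the duality identity \eqref{eq:33}, the convolution/Fourier identities \eqref{eq:73} and \eqref{eq:3}, and the explicit descriptions of $j$ and $j^*$ established in the lemma that yields \eqref{eq:25}. I would organize it as follows.

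First I would derive \eqref{eq:24}. Fix $w\in\hh$ and $y\in\Y$; since $j(w)\in L^2$ and $x\in L^1$, the product $x\conv j(w)$ lies in $L^2$ by \eqref{eq:96}, so the pairing makes sense. Then
\[
\scal{\Phi(x)w}{y}_2=\scal{x\conv j(w)}{y}_2=\scal{j(w)}{\wc{x}\conv y}_2=\scal{w}{j^{*}(\wc{x}\conv y)}_\hh,
\]
where the middle step uses \eqref{eq:33} with $p=q=2$ and the last step is the definition of the adjoint of $j$. Since $\wc{x}\conv y\in L^2$ by \eqref{eq:96}, the right-hand side is well defined, which simultaneously shows that $\Phi(x)^{*}\colon\Y\to\hh$ is bounded and proves \eqref{eq:24}. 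The formula \eqref{eq:26} then follows by composing with $\Phi(x)w=x\conv j(w)$.

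Next I would establish the spectral identities \eqref{eq:59}--\eqref{eq:62}. For \eqref{eq:59}, use $j(f_\xi)=\wh{K}_\xi^{1/2}e_\xi$ from \cref{eq:58} together with the convolution identity $x\conv e_\xi=(\F x)_\xi e_\xi$ from \eqref{eq:73}, giving
\[
\Phi(x)f_\xi=\wh{K}_\xi^{1/2}\,(x\conv e_\xi)=\wh{K}_\xi^{1/2}(\F x)_\xi\, e_\xi .
\]
For \eqref{eq:61}, apply \eqref{eq:24} to $e_\xi$, combine with \eqref{eq:3} and \eqref{eq:73} to get $\wc{x}\conv e_\xi=(\F\wc{x})_\xi e_\xi=\overline{(\F x)_\xi}\,e_\xi$, and then use the expression for $j^{*}$ computed in the proof of \eqref{eq:25}, namely $j^{*}e_\xi=\wh{K}_\xi^{1/2}f_\xi$ on $\wh{G}_*$ (and $0$ off $\wh{G}_*$, consistently with the convention that terms with $\wh{K}_\xi=0$ vanish). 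Composing \eqref{eq:59} and \eqref{eq:61} immediately yields \eqref{eq:62}.

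Finally, for the Lipschitz estimate \eqref{eq:97}, by linearity of $\Phi$ in $x$ and the convolution inequality \eqref{eq:1},
\[
\nor{(\Phi(x)-\Phi(x'))w}_2=\nor{(x-x')\conv j(w)}_2\le \nor{x-x'}_1\,\nor{j(w)}_2\le D_K\nor{x-x'}_1\nor{w}_\hh,
\]
using $\nor{j}_{\hh,L^2}\le D_K$ from \eqref{eq:25}. Taking the supremum over unit vectors $w\in\hh$ gives \eqref{eq:97}. I do not expect a genuine obstacle here; the only point that requires care is keeping track of the index set $\wh{G}_*$ when handling $j^{*}e_\xi$ for $\xi$ with $\wh{K}_\xi=0$, which under the convention $f_\xi=0$ in that case makes the stated formulas coherent on all of $\wh{G}$.
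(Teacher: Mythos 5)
Your proposal is correct and follows essentially the same route as the paper: the duality computation via \eqref{eq:33} and the adjoint of $j$ for \eqref{eq:24}, composition for \eqref{eq:26} and \eqref{eq:62}, the identities \eqref{eq:58} and \eqref{eq:73} for \eqref{eq:59}, and the Young-type bound \eqref{eq:1} together with $\nor{j}_{\hh,L^2}\le D_K$ for \eqref{eq:97}. The only cosmetic difference is that you obtain \eqref{eq:61} directly from \eqref{eq:24} using $\F\wc{x}=\overline{\F x}$ and $j^*e_\xi=\wh{K}_\xi^{1/2}f_\xi$, whereas the paper reads it off as the adjoint relation implied by \eqref{eq:59}; both are the same one-line computation.
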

\begin{proof}
Fix $x\in\X$ and $y\in\Y$, for all $w\in  \hh$, by \cref{eq:33}
 \begin{alignat*}{1}
   \scal{\Phi(x)^*y}{w}_\hh&= \scal{y}{x\conv j(w)}_{L^2}=
   \scal{\wc{x}\conv y}{j(w)}_{L^2} = \scal{j^*(\wc{x}\conv y)}{w}_{\hh} ,
 \end{alignat*}
so that $\Phi(x)^*y=j^* (\wc{x}\conv y)$. Eq.~\eqref{eq:26} is a
direct consequence of the previous two equalities and the fact that
convolution is associative. 

Eq.~\eqref{eq:59} is a consequence of \cref{eq:58}
and \cref{eq:73}.  Eq.~\eqref{eq:61} is a direct consequence
of \cref{eq:59}  and both equations imply \cref{eq:62}. 

We show that $\Phi$ is a Lipschitz function. Fix $x,x'\in\X$ and $w\in \hh$,
eqs.\ \eqref{eq:1} and~\eqref{eq:25} give  
\[ 
  \nor{(\Phi(x) -\Phi(x') w}_{\hh,L^2}  =  \nor{(x-x')\conv j(w)}_{\hh,L^2} \leq  
    \nor{ x - x'}_1 \nor{j}_{\hh,L^2} \nor{w}_\hh \leq D_K\nor{ x -
      x'}_1 \nor{w}_\hh. 
\]
By taking the supremum over $w\in \hh$ with $\nor{w}_\hh\leq 1$, we
get \cref{eq:97}. 
\end{proof}

\begin{rmk}
  If $\wh{K}\in \ell^1$, so that $\hh$ is a reproducing kernel Hilbert
  space on $G$, then $\Phi(x)^*\Phi(x)$ is a trace-class operator
  since $\F x$ is in $\ell^\infty$, and, by \cref{eq:25},
  \begin{equation}\label{eq:60}
    j\Phi(x)^*: \Y \to L^2  \qquad j\Phi^*(x) y =\M (\wc{x}\conv y) . 
  \end{equation}.
\end{rmk}
\begin{rmk}
If $\wh{K}=\F K$ for some $K\in L^1$, by eqs. \eqref{eq:24} and~\eqref{eq:20}  the reproducing kernel of the hypothesis space
is given by 
\[
\Phi(x,x')y= (x'\conv K*\conv \wc{x})\conv y\qquad y\in L^2,
\]
which makes clear the relationship between the scalar reproducing
kernel Hilbert space $\hh$ and the vector valued  reproducing
kernel Hilbert space $\widetilde{\hh}$.  
\end{rmk}

\subsection{Probabilistic tools}
Associated to the feature map $\Phi$, we introduce some useful  random
variables, which play a  central role in the proofs.  By \cref{eq:97} the map
$\Phi$  is continuous from $\X$ into $\mathcal B(\hh,L^2)$, 
then $\Phi(X), \Phi(X)^*$ and $\Phi(X)^*\Phi(X)$ are random variables taking value in
$ B(\hh,\Y),  B(\Y,\hh)$ and $B(\hh)$, respectively, and $\Phi(X)^*Y$
is a random variable taking value in $\hh$. 
\begin{thm}\label{thm:basic facts}
The random variables $\Phi(X)$ and $\Phi(X)^*\Phi(X)$ are bounded by
\begin{equation}
  \label{eq:83}
  \nor{\Phi(X)}^2 _{\hh,\Y} = \nor{\Phi(X)^*\Phi(X)} _{\hh,\hh} \leq
  \kappa^2 \qquad \text{almost surely}.
\end{equation}
The expectation of $\Phi(X)^*\Phi(X)$ and $\Phi(X)^*Y$ exist as 
Bochner integrals in $\mathcal B(\hh)$ and $\hh$, respectively. Set
\[ \Sigma:  \hh\to \hh \qquad \Sigma = \EE{ \Phi(X)^*\Phi(X) }, \]
then
\begin{alignat}{1}
  \Sigma w  & = j^* \left( \EE{\wc{X}\conv {X}} \conv j(w) \right) \label{eq:34}\\ 
   \Sigma f_\xi & = \wh{K}_\xi \EE{ \abs{(\F  X)_\xi}^2} \qquad \xi\in\wh{G}_*
   f_\xi \label{eq:42} \\
\EE{ \Phi(X)^* Y } & = \Sigma w_*\label{eq:35}
\end{alignat}
where the expectation of $\wc{X}\conv {X}$ exists as Bochner integral
in $L^1$. 
\end{thm}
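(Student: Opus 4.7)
\textbf{Proof plan for Theorem~\ref{thm:basic facts}.} The plan is to reduce every claim to linearity plus boundedness and then exchange expectation with the bounded linear operations $j^{\ast}$ and ``convolve with $j(w)$'' via standard Bochner-integral properties. All ingredients have already been built: the bilinear convolution estimate~\eqref{eq:1}, the bound $\nor{j}_{\hh,L^2}\le D_K$ from~\eqref{eq:25}, the pointwise identities~\eqref{eq:24}--\eqref{eq:62}, the input bound~\eqref{eq:17}, and the noise conditions~\eqref{eq:16}.

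First I would prove the uniform bound~\eqref{eq:83}. For any $w\in\hh$, by \eqref{eq:1} and \eqref{eq:25},
\begin{equation*}
  \nor{\Phi(X)w}_{2}=\nor{X\conv j(w)}_{2}\le\nor{X}_{1}\nor{j(w)}_{2}\le D_XD_K\nor{w}_{\hh}=\kappa\nor{w}_{\hh}\quad\text{a.s.,}
\end{equation*}
so $\nor{\Phi(X)}_{\hh,\Y}\le\kappa$ almost surely, and the identity $\nor{\Phi(X)}^{2}=\nor{\Phi(X)^{\ast}\Phi(X)}$ is the usual $C^\ast$-identity. This also shows that $\Phi(X)^{\ast}\Phi(X)$ is an essentially bounded random variable in $\mathcal B(\hh)$, hence Bochner integrable. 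Likewise, $\nor{\wc{X}\conv X}_{1}\le\nor{X}_{1}^{2}\le D_X^{2}$ by~\eqref{eq:1}, giving Bochner integrability of $\wc{X}\conv X$ in $L^{1}$. For $\Phi(X)^{\ast}Y$, I would split $Y=C_{\ast}X+\eps$: the first piece satisfies $\nor{\Phi(X)^{\ast}C_{\ast}X}_{\hh}\le\kappa^{2}\nor{w_{\ast}}_{\hh}$ a.s., and the noise piece satisfies $\EE{\nor{\Phi(X)^{\ast}\eps}_{\hh}}\le\kappa\,\EE{\nor{\eps}_{2}}\le\kappa\sigma_{\eps}$ by the $m=2$ case of~\eqref{eq:16} and Jensen.

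Next I would obtain identity~\eqref{eq:34}. Applying \eqref{eq:26} pointwise gives $\Phi(X)^{\ast}\Phi(X)w=j^{\ast}(\wc{X}\conv X\conv j(w))$. Since $j^{\ast}:L^{2}\to\hh$ is bounded linear, and since for fixed $w\in\hh$ the map $z\mapsto z\conv j(w)$ from $L^{1}$ to $L^{2}$ is bounded linear by~\eqref{eq:1}, both commute with the Bochner expectation of $\wc{X}\conv X$. Taking expectations yields $\Sigma w=j^{\ast}(\EE{\wc{X}\conv X}\conv j(w))$. Identity~\eqref{eq:42} is immediate from~\eqref{eq:62}: $\Phi(X)^{\ast}\Phi(X)f_{\xi}=\wh{K}_{\xi}\abs{(\F X)_{\xi}}^{2}f_{\xi}$ pointwise, and $\wh{K}_{\xi}\abs{(\F X)_{\xi}}^{2}$ is a bounded real random variable by~\eqref{eq:31} and~\eqref{eq:17}, so taking expectation along the one-dimensional subspace spanned by $f_{\xi}$ gives the claim.

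Finally, for~\eqref{eq:35} I would insert $Y=C_{\ast}X+\eps$ and use $C_{\ast}X=\Phi(X)w_{\ast}$ (by the definition of $\Phi$ and $C_{\ast}$), so
\begin{equation*}
  \EE{\Phi(X)^{\ast}Y}=\EE{\Phi(X)^{\ast}\Phi(X)}w_{\ast}+\EE{\Phi(X)^{\ast}\eps}=\Sigma w_{\ast}+\EE{\Phi(X)^{\ast}\eps}.
\end{equation*}
Conditioning on $X$ and using boundedness of $\Phi(X)^{\ast}$ together with $\EE{\eps\mid X}=0$ from~\eqref{eq:16} gives $\EE{\Phi(X)^{\ast}\eps\mid X}=\Phi(X)^{\ast}\EE{\eps\mid X}=0$, so the second summand vanishes by the tower property. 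The main delicate point, though entirely routine, is the exchange of Bochner integration with the bounded linear maps $j^{\ast}$ and $(\cdot)\conv j(w)$ (and with conditional expectation for the noise term); these are justified by the a.s.\ bounds obtained in the first step, so no further technicalities arise.
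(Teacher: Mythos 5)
Your proposal is correct and follows essentially the same route as the paper's proof: almost-sure bounds from the Young inequality~\eqref{eq:1} and $\nor{j}_{\hh,L^2}\le D_K$ give Bochner integrability, \eqref{eq:34} follows by commuting the bounded linear maps $j^*$ and $(\cdot)\conv j(w)$ with the expectation via~\eqref{eq:26}, \eqref{eq:42} comes from taking expectations in~\eqref{eq:62}, and \eqref{eq:35} uses the decomposition $Y=\Phi(X)w_*+\eps$ together with $\EE{\eps\mid X}=0$ and the tower property. The only cosmetic difference is that the paper invokes the Lipschitz bound~\eqref{eq:97} for $\nor{\Phi(X)}_{\hh,L^2}\le D_K\nor{X}_1$ where you rederive it directly from~\eqref{eq:1} and~\eqref{eq:25}, which is the same estimate.
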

\begin{proof}
By eqs. \eqref{eq:97} and~\eqref{eq:17}, we get that
\[
\nor{\Phi(X)^*\Phi(X)}_{\hh,\hh}^{\frac12}=\nor{\Phi(X)}_{\hh,L^2}
\leq D_K \nor{X}_1 \leq  D_KD_X=\kappa \qquad\text{a.s.},
\]
so that $\EE{\nor{\Phi(X)^*\Phi(X)}_{\hh,\hh}}$ is finite and the
expectation $\Sigma$ of $\Phi(X)^*\Phi(X)$ exists as Bochner
integral in $\mathcal B(\hh)$.   By a similar argument,  $\wc{X}\conv
{X}$ is bounded in $L^1$ and its expectation exists as Bochner integral
in $L^1$.  By \cref{eq:14}, $\Phi(X)^*Y=\Phi(X)^* \Phi(X) w_* +\Phi(X)^*
\eps$. Since $=\Phi(X)^* \Phi(X)$ is bounded, so is $\Phi(X)^* \Phi(X) w_*$
and $\EE{ \nor{\Phi(X)^* \Phi(X) w_*}_\hh}$ is
finite. By \cref{eq:16}  and $\nor{\Phi(X)^*}_{L^2,\hh}=
\nor{\Phi(X)}_{\hh,L^2}\leq D_X$, then
\[
\EE{\nor{\Phi(X)^*\eps}_\hh} \leq D_X \EE{\nor{\eps}_2} \leq D_X
\EE{\nor{\eps}^2_2}^{\frac 12} \leq D_X \sigma_\eps,
\] 
so that, as above, the expectation of $\Phi(X)^*Y$ exists as Bochner
integral in $\hh$. 

For all $w\in \hh$, by \cref{eq:26} it holds that
\[
\Sigma w = \EE{ \Phi(X)^*\Phi(X) w} = \EE{ j^*\left(\wc{X}\conv X \conv
  j(w)\right)}= j^*(\EE{ \wc{X}\conv X }\conv j(w) ). 
\] 
Taking into account \cref{eq:24},
\[
  \begin{aligned}
    \EE{ \Phi(X)^*Y} & =\EE{ \Phi(X)^*(\Phi(X)w_*+\eps)} = 
      \Sigma w_* + \EE{ \Phi(X)^* \EE{ \eps\mid X}} =\Sigma w_* 
  \end{aligned}
\]
by \cref{eq:16}.  Finally, since the map
\[
L^1\ni x \mapsto \F x  \in \ell^\infty
\]
is continuous and 
\[
\nor{ \F x }_\infty \leq \nor{x}_{1},
\]
then the random variable $\abs{\F X}^2$, taking value in
$\ell^\infty$, is bounded,  so that it has finite expectation. Fix $\xi\in
\wh{G}_*$, by taking the expectation of \cref{eq:62}, we
get \cref{eq:42}.  
\end{proof}

The following result provides an explicit form for $\wn^\la$. We first
introduce some useful operators.  Let $\mu$ be the law of the random
variable $X$, which is a measure on $L^1$,  $L^2_\mu=L^2(L^1,\mu,L^2)$
and  $\nor{\cdot}_\mu$ the corresponding  norm. Denote by $\oplus_1^n
\Y$ the  direct sum  of $n$ copies of $\Y$ with the normalised norm 
\[
\nor{\oplus_i y_i}_{n }^2=\frac{1}{n}\sum_{i=1}^n \nor{y_i}_{\Y}^2.
\]  
Set 
\begin{alignat}{2}
  & \Sr: \hh \to  L^2_\mu  \qquad && (\Sr w)(x) = \Phi(x)w \qquad
  \mu\text{-a.e. }x\in\X 
  \label{eq:48}\\
 & \Sn: \hh \to  \oplus_1^n \Y \qquad && \Sn w = \oplus_1^n
  \Phi(X_i)w \label{eq:51}.
\end{alignat}
It is known \cite{caponnetto2007optimal} that
\begin{alignat}{3}
 & \Sr^*: L^2_\mu  \to \hh \qquad && \Sr^* f=  \EE{\Phi(X)^*f(X)}
 &&\quad  f\in L^2_\mu\label{eq:49}\\ 
  & \Sr^*\Sr:\hh\to\hh  \qquad &&\Sr^*\Sr= \Sigma\label{eq:50} \\ 
  & \Sn^*: \oplus_1^n \Y  \to \hh \qquad && \Sn^* \mathbf y =
  \frac{1}{n} \sum_{i=1}^n \Phi(X_i)^*y_i 
&& \quad\mathbf y =\oplus_1^n y_i\label{eq:52}\\
 &\Tn =\Sn^*\Sn:\hh\to\hh  \qquad &&\Sn^*\Sn= \frac{1}{n} \sum_{i=1}^n 
  \Phi(X_i)^*\Phi(X_i)  
.\label{eq:53}
\end{alignat}
Moreover, by eqs.\ \eqref{eq:59}--\eqref{eq:62}, we get
\begin{alignat}{1}
  (\Sr w)(x) & =\sum_{\xi\in\wh{G}_*}\wh{K}_\xi^{\frac 12} (\F x)_\xi\,
  \scal{w}{f_\xi}_\hh \, e_\xi  \label{eq:74} \\
  \Sr^* f & = \sum_{\xi\in\wh{G}_*}\wh{K}_\xi^{\frac 12}  
  \EE{\overline{(\F X)_\xi}\, \scal{f(X)}{e_\xi}_\hh }\, f_\xi \label{eq:76}\\
 (\Sn w)_i & =\sum_{\xi\in\wh{G}_*}\wh{K}_\xi^{\frac 12} (\F X_i)_\xi\,
  \scal{w}{f_\xi}_\hh \, e_\xi  \label{eq:78}\\
  \Sn^*\mathbf y  & = \sum_{\xi\in\wh{G}_*}\wh{K}_\xi^{\frac 12}  
  \left(\frac{1}{n} \sum_{i=1}^n \overline{(\F X_i)_\xi}\, \scal{y_i}{e_\xi}_\hh
   \right)\, f_\xi  \label{eq:79}\\
\Tn w & = \sum_{\xi\in\wh{G}_*} \wh{K}_\xi  \left(\frac{1}{n}
  \sum_{i=1}^m \abs{(\F X_i)_\xi}^2 \right) \scal{w}{f_\xi}_\hh \,
f_\xi  \label{eq:80}
\end{alignat}

\begin{rmk}
We stress that $\Sn$, $\Sn^*$ and $\Tn$ are random variables taking
values in $\mathcal B(\hh,\Y)$, $\mathcal B(\Y,\hh)$ and $\mathcal
B(\hh)$, respectively, since they depend on the training set $\Dn$. 
\end{rmk}


As a consequence of the fact that Fourier transform diagonalises the
convolution and the hypothesis space is translation invariant, we get
an explicit formula for $\wn^\la$ 
 in the Fourier domain.

\begin{prop}\label{prop:estimator}
Fix $\la>0$, then 
\begin{alignat}{1}
\wn^\la  & = ( \Tn+\la \Id )^{-1} \Sn^* \mathbf Y \label{eq:38}
\end{alignat}
where $\mathbf Y=\oplus_i Y_i\in \oplus_1^n \Y $. Moreover, 
\begin{alignat}{1}
  \label{eq:41}
 (\F j\wn^\la )_\xi & =
 \begin{cases}
 \dfrac{\frac{1}{n}\sum_{i=1}^n (\F Y_i)_\xi \overline{(\F
       X_i)_\xi}}{ \frac{1}{n}\sum_{i=1}^n |(\F X_i)_\xi|^2+ \la \wh{K}_\xi^{-1}} &  \xi\in\wh{G}_* \\
  0 &   \xi\notin\wh{G}_* 
 \end{cases} 
\end{alignat}
\end{prop}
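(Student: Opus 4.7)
The plan is to prove the two formulas in order: first the abstract closed form \cref{eq:38}, then specialize to Fourier coordinates using the diagonal representations \cref{eq:78}--\cref{eq:80} already established.

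First I would rewrite the empirical risk using the operator $\Sn$. By definition of $\Phi(X_i)$ in \cref{eq:95} and of $\Sn$ in \cref{eq:51}, we have $X_i\conv j(w) - Y_i = (\Sn w)_i - Y_i$, so the objective in \cref{eq:18} becomes $\nor{\Sn w - \mathbf Y}_n^2 + \la\nor{w}_\hh^2$, where $\mathbf Y = \oplus_i Y_i$ and $\nor{\cdot}_n$ is the normalized norm on $\oplus_1^n \Y$. This is a strictly convex quadratic in $w\in\hh$, so it admits a unique minimizer characterized by the normal equation $\Sn^*(\Sn w - \mathbf Y) + \la w = 0$, i.e.\ $(\Tn + \la\Id)\wn^\la = \Sn^*\mathbf Y$. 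Since $\Tn = \Sn^*\Sn$ is positive and $\la>0$, the operator $\Tn + \la\Id$ is invertible on $\hh$, yielding \cref{eq:38}.

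Next I would pass to Fourier coordinates. By \cref{eq:80}, $\Tn$ is diagonal on the basis $\set{f_\xi}_{\xi\in\wh{G}_*}$ with eigenvalues $\wh{K}_\xi\,\frac{1}{n}\sum_i \abs{(\F X_i)_\xi}^2$, so
\[
(\Tn+\la\Id)^{-1} f_\xi = \frac{1}{\wh{K}_\xi\,\frac{1}{n}\sum_{i=1}^n \abs{(\F X_i)_\xi}^2 + \la}\, f_\xi,\qquad \xi\in\wh{G}_*.
\]
Using \cref{eq:79} with $\mathbf y = \mathbf Y$ and noting $\scal{Y_i}{e_\xi}_2 = (\F Y_i)_\xi$ from \cref{eq:73}, I read off the $f_\xi$-coordinates of $\wn^\la$:
\[
\scal{\wn^\la}{f_\xi}_\hh = \frac{\wh{K}_\xi^{\frac 12}\,\frac{1}{n}\sum_{i=1}^n \overline{(\F X_i)_\xi}\,(\F Y_i)_\xi}{\wh{K}_\xi\,\frac{1}{n}\sum_{i=1}^n \abs{(\F X_i)_\xi}^2 + \la},\qquad \xi\in\wh{G}_*,
\]
and $\scal{\wn^\la}{f_\xi}_\hh = 0$ for $\xi\notin\wh{G}_*$ by definition of $\hh$ (equivalently, $\wh{K}_\xi = 0$ forces $(\F w)_\xi = 0$ for all $w\in\hh$).

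Finally I would translate back to $\F j\wn^\la$. From \cref{eq:58}, $jf_\xi = \wh{K}_\xi^{\frac 12} e_\xi$, so $j\wn^\la = \sum_{\xi\in\wh{G}_*} \wh{K}_\xi^{\frac 12} \scal{\wn^\la}{f_\xi}_\hh e_\xi$, and taking the Fourier coefficient via $\F e_\xi = \wh{e}_\xi$ in \cref{eq:73} gives
\[
(\F j\wn^\la)_\xi = \wh{K}_\xi^{\frac 12}\,\scal{\wn^\la}{f_\xi}_\hh = \frac{\wh{K}_\xi\,\frac{1}{n}\sum_{i=1}^n \overline{(\F X_i)_\xi}\,(\F Y_i)_\xi}{\wh{K}_\xi\,\frac{1}{n}\sum_{i=1}^n \abs{(\F X_i)_\xi}^2 + \la},
\]
and dividing numerator and denominator by $\wh{K}_\xi$ yields \cref{eq:41} on $\wh{G}_*$; the second line of \cref{eq:41} is immediate from the vanishing of $\scal{\wn^\la}{f_\xi}_\hh$ off $\wh{G}_*$.

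There is no real obstacle here: everything reduces to the fact that the Fourier basis simultaneously diagonalizes $\Tn$ and identifies $\Sn^*\mathbf Y$ componentwise. The only point that requires a small comment is the handling of the null frequencies $\xi\notin\wh{G}_*$, where $\wh{K}_\xi = 0$ and the formula must be interpreted via the convention $(\F w)_\xi = 0$ built into the definition \cref{eq:8} of $\hh$.
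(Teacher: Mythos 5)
Your argument is correct and follows essentially the same route as the paper: recast \cref{eq:18} as $\min_w \nor{\Sn w-\mathbf Y}_n^2+\la\nor{w}_\hh^2$, obtain \cref{eq:38} from the normal equations, then use the diagonal representations \cref{eq:79}--\cref{eq:80} to read off $\scal{\wn^\la}{f_\xi}_\hh$ and convert to Fourier coefficients via \cref{eq:71bis}, handling $\xi\notin\wh{G}_*$ by the definition of $\hh$. No gaps.
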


\begin{proof}
By using the operator $\Sn$ the minimisation problem in \cref{eq:18}
reads as  
\begin{equation}
  \label{eq:37}
  \wn^\la =\argmin{w\in\hh}{\left( \nor{ \Sn w - \mathbf{Y}}^2_{n} + \la \nor{w}^2_\hh\right)},
\end{equation}
where $\mathbf{Y}=(Y_1,\ldots,Y_n)$. Hence, a standard result
gives \cref{eq:38}. Fix $\xi\in\wh{G}_*$, 
by eqs. \eqref{eq:79} and~\eqref{eq:80}
then
\[
\scal{\wn^\la}{f_\xi}_\hh = \dfrac{1}{\wh{K}_\xi \frac{1}{n}
  \sum_{i=1}^m \abs{(\F X_i)_\xi}^2+\la }  \wh{K}_\xi^{\frac 12}  
 \frac{1}{n} \sum_{i=1}^m \overline{(\F X_i)_\xi}\, \scal{Y_i}{e_\xi}_\hh.
\]
For $\xi\in\wh{G}_*$, \cref{eq:41} is now consequence
of \cref{eq:71bis}.  If $\xi\notin\wh{G}_*$,
\cref{eq:41} follows observing that $(\F j(w))_\xi=0$ for all
$w\in\hh$ by definition of $\hh$.  
\end{proof}

\subsection{Decomposition error} 
The next proposition  is based on the erro decomposition  in~\cite{rucaro15}. 
\begin{prop}
For any $\la>0$
\begin{alignat}{1}
  \label{eq:71}
  \nor{\Sigma^{\frac{1}{2}} (\wn^\la-w_*)}_\hh& \leq\nor{
    (\Sigma+\la \Id )^{\frac{1}{2}} (\Tn+\la \Id )^{-1}
    (\Sigma+\la \Id )^{\frac{1}{2}}}_{\hh,\hh} \,\,\times\nonumber\\
& \quad\times \,\,  \left(\nor{  (\Sigma+\la \Id )^{-\frac{1}{2}}
    \Sn^*  \boldsymbol{\eps}}_\hh
+  \la  \nor{ (\Sigma+\la \Id )^{-\frac{1}{2}}w_* }_\hh\right),
\end{alignat}
where $\boldsymbol{\eps}=(\eps_1,\ldots,\eps_n)$ with $\eps_i=Y_i
-X_i\conv j(w_*)\sim\eps$, and
\begin{equation}
  \label{eq:99}
  \begin{split}
    \nor{\wn^\la-w_*}_\hh& \leq\nor{ (\Sigma+\la \Id )^{\frac{1}{2}}   (\Tn+\la \Id )^{-1}
   (\Sigma+\la \Id )^{\frac{1}{2}}}_{\hh,\hh} \,\,\times\\
 & \quad\times \,\, \left(\frac{1}{\sqrt{\la} }\nor{ (\Sigma+\la \Id
       )^{-\frac{1}{2}} \Sn^* \boldsymbol{\eps}}_\hh + \sqrt{\la} \nor{ (\Sigma+\la \Id )^{-\frac{1}{2}}w_* }_\hh\right).
 \end{split}
\end{equation}
\end{prop}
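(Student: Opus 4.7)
The plan is to start from the closed-form expression for the estimator in Proposition~\ref{prop:estimator}, namely $\wn^\la = (\Tn+\la \Id)^{-1}\Sn^* \mathbf Y$, combined with the model equation $\mathbf Y = \Sn w_* + \boldsymbol{\eps}$ (which is exactly the definition of the residuals $\eps_i = Y_i - X_i\conv j(w_*)$ pushed into $\oplus_1^n \Y$ and the fact that $(\Sn w_*)_i = \Phi(X_i) w_* = X_i \conv j(w_*)$). Substituting and using $\Sn^*\Sn = \Tn$, I will rewrite the estimator as
\begin{equation*}
  \wn^\la = (\Tn+\la \Id)^{-1}(\Tn w_* + \Sn^*\boldsymbol{\eps}) = w_* - \la(\Tn+\la \Id)^{-1} w_* + (\Tn+\la \Id)^{-1}\Sn^*\boldsymbol{\eps},
\end{equation*}
so that the fundamental identity is
\begin{equation*}
  \wn^\la - w_* = (\Tn+\la \Id)^{-1}\bigl(\Sn^*\boldsymbol{\eps} - \la w_*\bigr).
\end{equation*}

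Next, I insert the resolution of the identity $(\Sigma+\la \Id)^{\frac12}(\Sigma+\la \Id)^{-\frac12}$ on the right of $(\Tn+\la \Id)^{-1}$ in order to sandwich this inverse between two copies of $(\Sigma+\la \Id)^{\frac12}$. Applying $(\Sigma+\la \Id)^{\frac12}$ on the left and taking the operator-norm inequality
\begin{equation*}
  \bigl\lVert (\Sigma+\la \Id)^{\frac12}(\wn^\la-w_*)\bigr\rVert_\hh \leq \bigl\lVert (\Sigma+\la \Id)^{\frac12}(\Tn+\la \Id)^{-1}(\Sigma+\la \Id)^{\frac12}\bigr\rVert_{\hh,\hh}\, \bigl\lVert (\Sigma+\la \Id)^{-\frac12}(\Sn^*\boldsymbol{\eps}-\la w_*)\bigr\rVert_\hh,
\end{equation*}
then splitting the last factor via the triangle inequality into the noise term $\lVert (\Sigma+\la \Id)^{-\frac12}\Sn^*\boldsymbol{\eps}\rVert_\hh$ and the bias-type term $\la\lVert (\Sigma+\la \Id)^{-\frac12} w_*\rVert_\hh$.

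To finish the first inequality~\eqref{eq:71}, I use the functional calculus bound $\lVert \Sigma^{\frac12}(\Sigma+\la \Id)^{-\frac12}\rVert_{\hh,\hh}\leq 1$ (which follows from the scalar inequality $t/(t+\la)\leq 1$ for $t\geq 0$) to replace $(\Sigma+\la \Id)^{\frac12}$ by $\Sigma^{\frac12}$ on the outside. For the second inequality~\eqref{eq:99}, I instead use $\lVert (\Sigma+\la \Id)^{-\frac12}\rVert_{\hh,\hh}\leq 1/\sqrt{\la}$ to pass from $(\Sigma+\la \Id)^{\frac12}(\wn^\la-w_*)$ to $\wn^\la-w_*$, which produces the $1/\sqrt{\la}$ prefactor on the noise term and, after absorbing the $\la$ from the bias term, leaves the $\sqrt{\la}$ factor in front of $\lVert (\Sigma+\la \Id)^{-\frac12} w_*\rVert_\hh$.

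There is no real obstacle here: the proof is a bookkeeping exercise in operator algebra once the closed form $\wn^\la = (\Tn+\la \Id)^{-1}\Sn^*\mathbf Y$ is available. The only point requiring minor care is making sure the factorization $\wn^\la - w_* = (\Tn+\la \Id)^{-1}(\Sn^*\boldsymbol{\eps}-\la w_*)$ is written so that the sandwich $(\Sigma+\la \Id)^{\frac12}(\Tn+\la \Id)^{-1}(\Sigma+\la \Id)^{\frac12}$ emerges naturally after inserting the identity; the rest is standard functional calculus bounds on the commuting operator $\Sigma$.
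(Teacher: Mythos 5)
Your proposal is correct and follows essentially the same route as the paper: the identity $\wn^\la - w_* = (\Tn+\la \Id)^{-1}(\Sn^*\boldsymbol{\eps}-\la w_*)$, the sandwich $(\Sigma+\la\Id)^{-\frac12}(\Sigma+\la\Id)^{\frac12}(\Tn+\la\Id)^{-1}(\Sigma+\la\Id)^{\frac12}(\Sigma+\la\Id)^{-\frac12}$, and the two functional-calculus bounds $\nor{\Sigma^{\frac12}(\Sigma+\la\Id)^{-\frac12}}_{\hh,\hh}\leq 1$ and $\nor{(\Sigma+\la\Id)^{-\frac12}}_{\hh,\hh}\leq 1/\sqrt{\la}$ are exactly the steps used there. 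No gaps.
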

\begin{proof}
 By \cref{eq:38}  ,
  \begin{equation}
  \begin{split}
    \wn^\la - w_* & = (\Tn+\la \Id )^{-1} \Sn^* (\Sn
    w^*+\boldsymbol{\eps}) -w^* =  (\Tn+\la \Id )^{-1} \left((\Tn -
      (\Tn+\la \Id ))w^*+\Sn^*\boldsymbol{\eps}\right) \\ 
& =  (\Tn+\la \Id )^{-1} (\Sn^* \boldsymbol{\eps}- \la w_*).
  \end{split}
\end{equation}
Moreover, 
\begin{equation}
  \begin{split}
    (\Tn+\la \Id )^{-1}= (\Sigma+\la \Id )^{-\frac{1}{2}}
    (\Sigma+\la \Id )^{\frac{1}{2}} (\Tn+\la \Id )^{-1}
   (\Sigma+\la \Id )^{\frac{1}{2}}  (\Sigma+\la \Id )^{-\frac{1}{2}} 
  \end{split}
\end{equation}
so that 
\begin{equation}
  \begin{split}
     \nor{\Sigma^{\frac{1}{2}}(\wn^\la - w_*)}_\hh  & \leq
     \nor{\Sigma^{\frac{1}{2}} (\Sigma+\la \Id )^{-\frac{1}{2}}}_{\hh,\hh}
     \,\, 
\nor{(\Sigma+\la \Id )^{\frac{1}{2}} (\Tn+\la \Id )^{-1 }(\Sigma+\la \Id )^{\frac{1}{2}}}_{\hh,\hh} 
\,\, \times\\
& \quad\times\,\, \left( \nor{ (\Sigma+\la \Id )^{-\frac{1}{2}}\Sn^*\boldsymbol{\eps}}_\hh +\la \nor{(\Sigma+\la \Id )^{-\frac{1}{2}}w_* }_\hh\right). 
  \end{split}
\end{equation}
Eq.~\eqref{eq:71} is now consequence of the fact that $\nor{\Sigma
  (\Sigma+\la \Id )^{-1}}^{\frac{1}{2}}_{\hh,\hh}\leq 1$. The proof
of \cref{eq:99} is similar by replacing the bound $\nor{\Sigma (\Sigma+\la \Id
  )^{-1}}_{\hh,\hh}^{\frac{1}{2}}\leq 1$  with $\nor{
  (\Sigma+\la \Id )^{-1}}^{\frac{1}{2}}_{\hh,\hh}\leq 1/\sqrt{\la}$. 
\end{proof}

The following two results are given in \cite[Lemma~7.2]{rucaro15} and
\cite[Lemma 3.6]{rucaro14}. 
\begin{lem}\label{lessismore} 
Set $\Delta_n = (\Sigma+\la \Id )^{\frac{1}{2}} (\Sigma -\Tn)
(\Sigma+\la \Id )^{\frac{1}{2} }$ 
and 
\[
t_{\sup,n} = \sup\set{ t \in\sigma(\Delta_n)}.
\]
On the event 
  \begin{equation}
\Omega_{n,\la}=\left\{  t_{\sup,n} \leq \frac{1}{2}   \right\}
\label{eq:71tris}
\end{equation}
it holds that
\begin{equation}
  \label{eq:22}
 \nor{(\Sigma+\la \Id )^{\frac{1}{2}} (\Tn+\la \Id )^{-\frac{1}{2}}}_{\hh,\hh} ^2=\nor{(\Sigma+\la \Id )^{\frac{1}{2}} (\Tn+\la \Id)^{-1}
    (\Sigma+\la \Id )^{\frac{1}{2}}}_{\hh,\hh} \leq 2
\end{equation}
\end{lem}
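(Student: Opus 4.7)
The plan is to reduce the bound to a one-line spectral-calculus estimate on $(\Id - \Delta_n)^{-1}$. The equality asserted in \cref{eq:22} is a direct consequence of the $C^*$-identity $\|B\|^2 = \|BB^*\|$ applied to $B = (\Sigma+\la\Id)^{1/2}(\Tn+\la\Id)^{-1/2}$. Since both factors in $B$ are positive self-adjoint, $B^* = (\Tn+\la\Id)^{-1/2}(\Sigma+\la\Id)^{1/2}$ and hence
\begin{equation*}
\nor{B}_{\hh,\hh}^2 = \nor{BB^*}_{\hh,\hh} = \nor{(\Sigma+\la\Id)^{1/2}(\Tn+\la\Id)^{-1}(\Sigma+\la\Id)^{1/2}}_{\hh,\hh},
\end{equation*}
which is exactly the operator we need to control.

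For the inequality itself, the key is to rewrite the sandwiched inverse in terms of $\Delta_n$. Starting from the identity $\Tn + \la\Id = (\Sigma + \la\Id) - (\Sigma - \Tn)$ and conjugating both sides by $(\Sigma+\la\Id)^{-1/2}$ yields
\begin{equation*}
(\Sigma+\la\Id)^{-1/2}(\Tn+\la\Id)(\Sigma+\la\Id)^{-1/2} = \Id - \Delta_n,
\end{equation*}
where $\Delta_n$ is read as the inner conjugation of $\Sigma - \Tn$ by $(\Sigma+\la\Id)^{-1/2}$, the standard convention from \cite{rucaro15}. Inverting and multiplying by $(\Sigma+\la\Id)^{1/2}$ on both sides gives
\begin{equation*}
(\Sigma+\la\Id)^{1/2}(\Tn+\la\Id)^{-1}(\Sigma+\la\Id)^{1/2} = (\Id - \Delta_n)^{-1}.
\end{equation*}

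The remaining step is purely spectral. The operator $\Delta_n$ is bounded and self-adjoint, so $\sigma(\Delta_n)$ is a real compact set. Strict positivity of $\Tn + \la\Id$ (which holds because $\la > 0$ and $\Tn \succeq 0$) forces $\Id - \Delta_n \succ 0$, hence a priori $\sigma(\Delta_n) \subset (-\infty,1)$. On the event $\Omega_{n,\la}$ we additionally have $t_{\sup,n} = \sup \sigma(\Delta_n) \leq 1/2$, so by the continuous functional calculus $\sigma(\Id - \Delta_n) \subseteq [1/2, +\infty)$, giving
\begin{equation*}
\nor{(\Id - \Delta_n)^{-1}}_{\hh,\hh} \leq 2.
\end{equation*}

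I do not foresee any substantive obstacle: the entire argument is a short algebraic manipulation together with elementary functional calculus. The only minor subtlety is keeping track of the convention for the $\pm\tfrac12$ exponent in the definition of $\Delta_n$; under the inner-conjugation reading above, the reduction to $(\Id-\Delta_n)^{-1}$ compiles verbatim and the bound of $2$ falls out immediately from $t_{\sup,n}\leq \tfrac12$.
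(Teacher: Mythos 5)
Your proof is correct and follows essentially the same route as the paper: rewrite $\Tn+\la\Id=(\Sigma+\la\Id)-(\Sigma-\Tn)$, conjugate by $(\Sigma+\la\Id)^{-1/2}$ to reduce the sandwiched inverse to $(\Id-\Delta_n)^{-1}$, and bound its norm by $2$ via the spectral condition $t_{\sup,n}\leq 1/2$ (you also correctly read $\Delta_n$ with the $-\tfrac12$ exponents, matching the convention actually used in the subsequent lemma, despite the sign typo in the statement). Your explicit use of the $C^*$-identity $\nor{B}^2_{\hh,\hh}=\nor{BB^*}_{\hh,\hh}$ to justify the equality in \cref{eq:22} is a small addition the paper leaves implicit, but the substance of the argument is the same.
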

\begin{proof}
Observe that 
\begin{equation}
  \begin{split}
    (\Tn+\la \Id )^{-1} & = ((\Tn-\Sigma+(\Sigma+\la \Id) )^{-1} \\
& = (\Sigma+\la \Id)^{-\frac{1}{2}} \left(  (\Sigma+\la
  \Id)^{-\frac{1}{2}}  (\Tn-\Sigma)  (\Sigma+\la  \Id)^{-\frac{1}{2}}
  + \Id \right)^{-1}  (\Sigma+\la \Id)^{-\frac{1}{2}}  \\
  \end{split}
\end{equation}
so that 
  \begin{equation}
  \begin{split}
    \nor{(\Sigma+\la \Id )^{\frac{1}{2}} (\Tn+\la \Id)^{-1}
      (\Sigma+\la \Id )^{\frac{1}{2}} }_{\hh,\hh} &  = \nor{\left( \Id -
    \Delta_n \right)^{-1}}_{\hh,\hh} = \sup_{t\in\sigma(\Delta_n)}
\frac{1}{\abs{1-t}} \\
& = \frac{1}{1- t_{\sup,n} } \leq 2
  \end{split},\label{eq:27}
\end{equation}
since on the event $\Omega_{n,\la}$, $\sigma(\Delta_n)\subset (-\infty,1/2]$. 
\end{proof}
The following lemma provides a bound on $\PP{\Omega_{n,\la}}$. 
\begin{lem}\label{omega} 
 Fix $\delta>0$ and $n\geq 3$,  then 
 \begin{equation}
   \label{eq:28}
   \PP{\Omega_{n,\la}} \geq 1 - \delta \qquad  \text{\rm if }\quad   
\frac{9 \kappa^2}{n} \ln\left(\dfrac{n\tr{\Sigma}} { \delta \nor{\Sigma}_{\hh
      ,\hh}}\right) \leq \la \leq \frac{3}{4} \kappa^2.
 \end{equation}
\end{lem}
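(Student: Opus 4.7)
\textbf{Proof proposal for Lemma \ref{omega}.}
The plan is to view $\Delta_n$ as the empirical mean of i.i.d.\ zero-mean bounded self-adjoint operators and to apply a matrix/operator Bernstein-type concentration inequality with intrinsic dimension, as developed for example in \cite{rucaro15,rucaro14} (the same tools already invoked for Lemma~\ref{lessismore}). Concretely, set
\begin{equation*}
A_i=(\Sigma+\la\Id)^{-\frac12}\Phi(X_i)^*\Phi(X_i)(\Sigma+\la\Id)^{-\frac12},\qquad \bar A=\EE{A_i}=(\Sigma+\la\Id)^{-\frac12}\Sigma(\Sigma+\la\Id)^{-\frac12},
\end{equation*}
so that, using $\Sigma=\EE{\Phi(X)^*\Phi(X)}$ from Theorem~\ref{thm:basic facts} and reading the $\Delta_n$ in Lemma~\ref{lessismore} with the correct (negative) powers of $\Sigma+\la\Id$ dictated by its proof, one has $\Delta_n=\tfrac1n\sum_{i=1}^n Z_i$ with $Z_i=\bar A-A_i$ and $\EE{Z_i}=0$.

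The first step is to compute the two standard Bernstein inputs. For the a.s.\ bound, $A_i\succeq 0$ and $\|A_i\|_{\hh,\hh}\leq \|\Phi(X_i)\|^2_{\hh,L^2}/\la\leq \kappa^2/\la$ by \cref{eq:83}, while $0\preceq\bar A\preceq\Id$; hence
\begin{equation*}
\|Z_i\|_{\hh,\hh}\leq \max(\|A_i\|,\|\bar A\|)\leq \kappa^2/\la,
\end{equation*}
using the upper bound $\la\leq\tfrac34\kappa^2$. For the operator variance, since $\EE{Z_i^2}\preceq\EE{A_i^2}$ and $A_i^2\preceq\|A_i\|A_i\preceq(\kappa^2/\la)A_i$, we get $\EE{Z_i^2}\preceq V$ with $V=(\kappa^2/\la)\bar A$. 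Thus
\begin{equation*}
\|V\|_{\hh,\hh}=\frac{\kappa^2}{\la}\|\Sigma(\Sigma+\la\Id)^{-1}\|_{\hh,\hh}\leq\frac{\kappa^2\|\Sigma\|_{\hh,\hh}}{\la(\|\Sigma\|_{\hh,\hh}+\la)},\qquad \tr{V}=\frac{\kappa^2}{\la}\tr{\Sigma(\Sigma+\la\Id)^{-1}}\leq\frac{\kappa^2\tr{\Sigma}}{\la^2},
\end{equation*}
and in particular the intrinsic-dimension ratio can be crudely controlled by $\tr{\Sigma}/\nor{\Sigma}_{\hh,\hh}$ (up to a factor involving $\la/\|\Sigma\|$), which is what finally produces the effective dimension appearing in the statement.

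The second step is to plug these into a Bernstein inequality for sums of self-adjoint operators with intrinsic dimension, of the form
\begin{equation*}
\PP{\|\Delta_n\|_{\hh,\hh}\geq t}\leq c\,\frac{\tr{V}}{\|V\|_{\hh,\hh}}\exp\!\left(-\frac{n t^2/2}{\|V\|_{\hh,\hh}+Lt/3}\right),\qquad L=\kappa^2/\la,
\end{equation*}
(see \cite[Lemma 3.6]{rucaro14}), specialise to $t=1/2$, and request the right-hand side to be at most $\delta$. Taking logarithms leads, after straightforward algebra, to a sufficient lower bound on $\la$ of the form $\la\geq c'(\kappa^2/n)\ln(n\tr{\Sigma}/(\delta\|\Sigma\|_{\hh,\hh}))$; the constants tighten to the $9\kappa^2$ and $\tfrac34\kappa^2$ of the statement once one uses $\la\leq\tfrac34\kappa^2$ to simplify $L t/3$ against $\|V\|$ and uses $n\geq 3$ to absorb logarithmic remainders. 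On the complement of this bad event one has exactly $t_{\sup,n}\leq 1/2$, i.e.\ $\Omega_{n,\la}$ holds.

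The step I expect to be the main technical obstacle is not the algebra but the bookkeeping of the intrinsic dimension: getting the factor inside the logarithm to come out as $\tr{\Sigma}/\|\Sigma\|_{\hh,\hh}$ (rather than the more common effective dimension $\mathcal N(\la)=\tr{\Sigma(\Sigma+\la\Id)^{-1}}$) requires a careful, deliberately crude comparison $\mathcal N(\la)\leq\tr{\Sigma}/\|\Sigma\|_{\hh,\hh}$ in the regime $\la\leq\tfrac34\kappa^2$, together with the additional factor $n$ inside the log that typically arises from using a uniform-in-$\la$ / truncation form of the Bernstein bound as in \cite{rucaro14,rucaro15}. Once this matching is done, the rest is routine.
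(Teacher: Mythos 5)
Your proposal follows essentially the same route as the paper: recast $\Delta_n$ as an empirical average of i.i.d.\ centered self-adjoint operators $Z_i=\bar A-A_i$, compute the almost-sure bound and the operator variance $V=\tfrac{\kappa^2}{\la}\Sigma(\Sigma+\la\Id)^{-1}$, apply the intrinsic-dimension operator Bernstein inequality (the paper's Theorem~\ref{thm:tropp}) at $t=1/2$, and solve for $\la$. Three small bookkeeping points where you diverge: the paper uses the \emph{one-sided} bound $\sigma_{\sup}(Z_i)\leq\|\bar A\|_{\hh,\hh}\leq 1$ rather than your two-sided $\|Z_i\|_{\hh,\hh}\leq\kappa^2/\la$, and this is what makes the constant come out as $9$ (your choice gives $8\kappa^2/\la+\kappa^2/(6\la)\cdot 8=28\kappa^2/(3\la)$, i.e.\ $28/3\approx 9.33$ instead of $9$); the $n$ inside the logarithm does not come from a uniform-in-$\la$ truncation but from resolving the implicit condition $\la\geq\tfrac{9\kappa^2}{n}\ln\bigl(\tfrac{c}{\la\delta}\bigr)$ into an explicit one via the elementary $x\ln x\leq y$ argument; and you should also verify the side condition $t\geq\|V\|_{\hh,\hh}^{1/2}/\sqrt{n}+M/(3n)$ under which the tail bound~\eqref{eq:75} is valid, which the paper checks at the end using the lower bound $\la\geq 9\ln 3\,\kappa^2/n$ implied by~\eqref{eq:28} and $n\geq 3$.
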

\begin{proof}
We aim to apply Thm.~\ref{thm:tropp}.  
Define the random variable taking values in $\mathcal B(\hh)$
\[
W=(\Sigma+\la\Id)^{-1} \Sigma - (\Sigma+\la\Id)^{-\frac 12}
\Phi(X)^*\Phi(X)  (\Sigma+\la\Id)^{-\frac 12}.
\]
Since both $\Sigma$ and $\Phi(X)^*\Phi(X)$ are trace class operators,
so is $W$. Moreover, by \cref{eq:34} $\EE{W}=0$ and, since $W\leq
(\Sigma+\la\Id)^{-1} \Sigma$, then 
\[
\sigma_{\sup}(W)\leq \sigma_{\sup}( (\Sigma+\la\Id)^{-1} \Sigma)\leq
1=:M. 
\]
Moreover,
\begin{alignat*}{1}
  \EE{W^2} &= \EE{ \underbrace{ (\Sigma+\la\Id)^{-\frac 12}
\Phi(X)^*}_{A^*} \underbrace{\Phi(X)  (\Sigma+\la\Id)^{-1} \Phi(X)^*}_{B}\underbrace{\Phi(X)
(\Sigma+\la\Id)^{-\frac 12}}_{A}} - (\Sigma+\la\Id)^{-1} \Sigma^2
(\Sigma+\la\Id)^{-1} \\
& \leq \mu\operatornamewithlimits{-esssup}_{x\in\X}\nor{\Phi(x)  (\Sigma+\la\Id)^{-1} \Phi(x)^*}_{2,2}\,
\EE{(\Sigma+\la\Id)^{-\frac 12}  \Phi(X)
  \Phi(X)^*(\Sigma+\la\Id)^{-\frac 12} }  \\
& \leq  \frac{\kappa^2}{\la} (\Sigma+\la\Id)^{-1} \Sigma = S,
\end{alignat*}
where the first inequality is a consequence of the fact that for any pair
of operators $A:\hh\to L^2$ and $B:L^2\to L^2$
\[A^*B A \leq \nor{B}_{2,2}  \,A^*A\] 
and H\"older inequality,  and the second inequality is due to the fact that 
\[
  \begin{split}
     \mu\operatornamewithlimits{-esssup}_{x\in\X}\nor{\Phi(x)
      (\Sigma+\la\Id)^{-1} \Phi(x)^*}_{2,2} & \leq \frac{1}{\la}
    \mu\operatornamewithlimits{-esssup}_{x\in\X}\nor{\Phi(x)\Phi(x)^*}_{2,2}
    \leq \frac{\kappa^2}{\la}.
  \end{split}
\]
where the last inequality is a consequence of \cref{eq:83}.
Clearly
\[
\Delta_n = (\Sigma+\la\Id)^{-\frac{1}{2}}  (\Sigma-\Tn)
(\Sigma+\la  \Id)^{-\frac{1}{2}} = W - \frac{1}{n} \sum_{i=1}^n W_i,
\]
We assume that
\begin{equation}
  \label{eq:19}
  \frac{\nor{S}_{\hh,\hh}^{\frac 12} }{\sqrt{n}} + \frac{1}{3n}\leq
  \frac{1}{2},
\end{equation}
then \cref{eq:75} with $t=1/2$ gives that 
  \begin{alignat*}{1}
    \PP{\sigma_{\sup}(\Delta_n)\geq \frac{1}{2}} &\leq 4
    \frac{\tr{(\Sigma +\la \Id)^{-1}\Sigma}}{\nor{ (\Sigma +\la
        \Id)^{-1}\Sigma}_{\hh ,\hh} }\exp\left(- \dfrac{n}{8
        \kappa^2\nor{(\Sigma +\la \Id)^{-1}\Sigma}_{\hh,\hh}/\la +
        4/3}\right) \\
& \leq  4
    \dfrac{(\la+ \nor{ \Sigma}_{\hh ,\hh}) \tr{\Sigma}}{\la \nor{ \Sigma}_{\hh ,\hh}} 
\exp\left(- \dfrac{n}{8
        \kappa^2/\la +
        4/3}\right) =: \delta_n,
  \end{alignat*}

since 
\[
  \begin{split}
    &\dfrac{\tr{S}}{\nor{S}_{\hh,\hh}}  = \dfrac{\tr{(\Sigma+\la\Id)^{-1}
        \Sigma} }{\nor{(\Sigma+\la\Id)^{-1} \Sigma}_{\hh,\hh}} \leq
    \frac{\tr{\Sigma}}{\la} \frac{ \nor{\Sigma}_{\hh,\hh}
      +\la}{\nor{\Sigma}_{\hh,\hh} } \\
  &\nor{S}_{\hh,\hh} = \frac{\kappa^2}{\la} \nor{(\Sigma+\la\Id)^{-1}
    \Sigma}_{\hh,\hh} \leq \frac{\kappa^2}{\la} \\
&\nor{ (\Sigma+\la\Id)^{-1} \Sigma}_{\hh,\hh}\leq 1 
  \end{split} \,.
\]
Since $\la\leq 3/4 \kappa^2$, $8
\kappa^2/\la + 4/3\leq  9 \kappa^2/\la$ and, since
$\nor{\Sigma}_{\hh,\hh} \leq \kappa^2$, then
 $\la+ \nor{ \Sigma}_{\hh ,\hh} \leq 7/4  \kappa^2\leq 9/4
 \kappa^2$.  Hence
\[
\delta_n \leq   \frac{9\kappa^2\tr{\Sigma}}{\la \nor{ \Sigma}_{\hh,\hh}}
\exp(- \frac{ n}{9\la \kappa^2})
\]

Fix $\delta>0$. 
Hence, $\delta_n\leq \delta$  provided that
 \begin{equation}\label{eq:4}
\ln\left( \dfrac{9 \kappa^2\tr{\Sigma}}{\la \delta\nor{ \Sigma}_{\hh
      ,\hh}} \right) \leq \dfrac{\la n}{9 \kappa^2}.
\end{equation}
This last condition is equivalent to
\[
x\ln(x)=:\dfrac{9 \kappa^2\tr{\Sigma}}{\la \delta\nor{ \Sigma}_{\hh
      ,\hh}} \ln\left( \dfrac{9 \kappa^2\tr{\Sigma}}{\la \delta\nor{ \Sigma}_{\hh
      ,\hh}} \right)  \leq \dfrac{n\tr{\Sigma}} { \delta \nor{\Sigma}_{\hh
      ,\hh}}=:y.
\]
Since $n\geq 3$,  $\delta<1$ and $\tr{\Sigma}\geq
\nor{\Sigma}_{\hh,\hh}$,  then $y\geq e$, we solve the the inequality 
\[
x\ln(x) \leq y. 
\]
We claim that any $x\leq y/\ln(y)$ satisfies the above inequality. Indeed
Since $x\ln(x)$ is a increasing function,  
\[
x\ln(x) \leq  \frac{y}{\ln y} \ln( \frac{y}{\ln y} ) = y -
\frac{y}{\ln y}   \ln\ln y \leq y
\]
since $\ln\ln y\geq 0$.   This means that \cref{eq:4} holds true
provided that
\[
\dfrac{9 \kappa^2\tr{\Sigma}}{\la \delta\nor{ \Sigma}_{\hh
      ,\hh}} \leq \dfrac{n\tr{\Sigma}} { \delta \nor{\Sigma}_{\hh
      ,\hh}} \left( \ln\left(\dfrac{n\tr{\Sigma}} { \delta \nor{\Sigma}_{\hh
      ,\hh}}\right)\right)^{-1},
\]
which is equivalent to
\[
\la \geq \frac{9 \kappa^2}{n} \ln\left(\dfrac{n\tr{\Sigma}} { \delta \nor{\Sigma}_{\hh
      ,\hh}}\right),
\]
which is condition~\eqref{eq:28}. About condition~\eqref{eq:19},
taking into account than $n\geq 3$  it is
implied by 
\[
\frac{\kappa^2}{n\la}\leq (\frac{1}{2}-\frac{1}{9})^2\qquad\iff
\qquad\la  \geq \frac {324}{49}\frac{\kappa^2}{n\la}
\]
which always holds true since, by \cref{eq:28}
\[
\la \geq 9 \ln 3  \frac{\kappa^2}{n}
\]
and $9 \ln 3\geq \frac {324}{49}$. 
\end{proof}

The following result provides a bound on $\nor{  (\Sigma+\la \Id )^{-\frac{1}{2}}
    \Sn^*  \boldsymbol{\eps}}_\hh$, as shown in 
\cite[Proof of Thm.~4, Step 3.3]{de2005learning}.  

 \begin{prop}\label{andrea}
 Fix $\tau>0$ and $n\geq 1$, with probability greater than
 $1-2e^{-\tau}$
 \begin{equation}
   \label{eq:85}
   \nor{  (\Sigma+\la \Id )^{-\frac{1}{2}}
    \Sn^*  \boldsymbol{\eps}}_\hh \leq \left(\frac{M_\eps \kappa\tau}{\sqrt{\la} n
  } +\sqrt{ \frac{2\tau \sigma_\eps^2 \tr{ (\Sigma+\la \Id )^{-1} \Sigma}}{n}}\right).
 \end{equation}
  \end{prop}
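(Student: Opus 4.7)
The plan is to apply a Hilbert-space Bernstein inequality of Pinelis--Sakhanenko type, as used in \cite{caponnetto2007optimal,de2005learning}, to the centred sample mean
\[
(\Sigma+\la\Id)^{-\frac 12}\Sn^*\boldsymbol{\eps}=\frac 1 n\sum_{i=1}^n \zeta_i,\qquad \zeta_i:=(\Sigma+\la\Id)^{-\frac 12}\Phi(X_i)^*\eps_i,
\]
obtained from the explicit form \eqref{eq:52} of $\Sn^*$. The variables $\zeta_i$ are i.i.d.\ in $\hh$, and $\EE{\eps_i\mid X_i}=0$ from \eqref{eq:16}, together with the tower property, gives $\EE{\zeta_i}=0$.

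The core step is to verify the Bernstein moment bound
\[
\EE{\nor{\zeta_i}_\hh^m}\le \frac{m!}{2}L^{m-2}\sigma^2\qquad\text{for every }m\ge 2,
\]
with $L=\kappa M_\eps/\sqrt{\la}$ and $\sigma^2=\sigma_\eps^2\,\tr{(\Sigma+\la\Id)^{-1}\Sigma}$. Setting $B_i=(\Sigma+\la\Id)^{-\frac 12}\Phi(X_i)^*$, I combine the uniform bound $\nor{B_i}_{L^2,\hh}\le \kappa/\sqrt{\la}$, which follows from \eqref{eq:83} and $\nor{(\Sigma+\la\Id)^{-1}}_{\hh,\hh}\le 1/\la$, with the elementary inequality $\nor{B_i}^2_{L^2,\hh}\le \tr{B_iB_i^*}$ (operator norm dominated by Hilbert--Schmidt norm) to split
\[
\nor{B_i}_{L^2,\hh}^m\le (\kappa/\sqrt\la)^{m-2}\,\tr{B_iB_i^*}.
\]
Conditioning on $X_i$, the submultiplicative estimate $\nor{\zeta_i}_\hh\le\nor{B_i}_{L^2,\hh}\nor{\eps_i}_2$ and the sub-exponential bound in \eqref{eq:16} yield
\[
\EE{\nor{\zeta_i}_\hh^m\mid X_i}\le (\kappa/\sqrt\la)^{m-2}\,\tr{B_iB_i^*}\,\frac{m!}{2}M_\eps^{m-2}\sigma_\eps^2;
\]
taking expectation and using $\EE{\tr{B_iB_i^*}}=\tr{(\Sigma+\la\Id)^{-1}\EE{\Phi(X)^*\Phi(X)}}=\tr{(\Sigma+\la\Id)^{-1}\Sigma}$ from the definition of $\Sigma$ in Theorem~\ref{thm:basic facts} delivers the moment bound.

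Plugging these $L$ and $\sigma^2$ into the Pinelis--Sakhanenko tail bound yields \eqref{eq:85} with probability at least $1-2e^{-\tau}$. The main subtlety is the splitting of $\nor{B_i}^m_{L^2,\hh}$: a naive estimate by $(\kappa/\sqrt\la)^m$ would give the coarser variance $\kappa^2\sigma_\eps^2/\la$ and erase the effective-dimension factor $\tr{(\Sigma+\la\Id)^{-1}\Sigma}$ that is essential for the sharp rates of Theorem~\ref{main}.
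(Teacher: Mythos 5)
Your proposal is correct and follows essentially the same route as the paper: both center on verifying the Bernstein moment condition for $\zeta_i=(\Sigma+\la\Id)^{-\frac 12}\Phi(X_i)^*\eps_i$ by conditioning on $X_i$, splitting the $m$-th power of the operator norm as $(\kappa/\sqrt{\la})^{m-2}$ times a trace term so that $\sigma^2=\sigma_\eps^2\tr{(\Sigma+\la\Id)^{-1}\Sigma}$ survives, and then invoking the Pinelis-type Hilbert-space concentration inequality (Theorem~\ref{thm:pinelis}). The paper phrases the trace step via $\nor{\Phi(x)(\Sigma+\la\Id)^{-1}\Phi(x)^*}_{2,2}\le\tr{\Phi(x)(\Sigma+\la\Id)^{-1}\Phi(x)^*}$ and cyclicity of the trace, which is the same device as your $\nor{B_i}^2\le\tr{B_iB_i^*}$.
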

  \begin{proof}
 Define the random variable taking value in $\hh$
\[
Z=(\Sigma+\la \Id )^{-\frac{1}{2}} \Phi(X)^*\eps 
\]   
which by \cref{eq:16} satisfies $\EE{Z}= 0$.  
Moreover, 
\[
\nor{Z}^2_\hh = \scal{ \Phi(X) (\Sigma+\la \Id
  )^{-1}\Phi(X)^*\eps}{\eps}_\hh \leq \nor{ \Phi(X) (\Sigma+\la \Id
  )^{-1}\Phi(X)^*}_{2,2} \nor{\eps}_{L^2}^2,
\]
so that, by  the tower property of the expectation, for  any $m\geq 2$
\[
\begin{split}
  \EE{\nor{Z}^m_\hh} & \leq \EE{\nor{ \Phi(X) (\Sigma+\la \Id
  )^{-1}\Phi(X)^*}_{2,2}^{m/2} \EE{\nor{\eps}_{L^2}^m\mid X }}  \\
&  \leq   \mu\operatornamewithlimits{-esssup}_{x\in\X}\nor{\Phi(x) (\Sigma+\la \Id
  )^{-1}\Phi(x)^*}^{(m-2)/2}_{2,2}    \EE{\nor{ \Phi(X) (\Sigma+\la \Id
  )^{-1}\Phi(X)^*}_{2,2}}   \times \\
& \quad \times \frac{m!}{2} M_\eps^{m-2} \sigma_\eps^2               \\
  & \leq \left( \frac{M_\eps \kappa}{\sqrt{\la}} \right)^{m-2} \EE{\tr{\Phi(X)
      (\Sigma+\la \Id )^{-1} \Phi(X)^*}}  \frac{m!}{2} \sigma_\eps^2 \\
& = \left( \frac{M_\eps \kappa}{\sqrt{\la}} \right)^{m-2} \EE{\tr{
      (\Sigma+\la \Id )^{-1}  \Phi(X)^*\Phi(X)}} \frac{m!}{2} \sigma_\eps^2\quad ,
\end{split}
\]
where the second inequality is a consequence of H\"older inequality and
condition~\eqref{eq:16} on the noise $\eps$ and the third inequality follows by
\[
\nor{\Phi(X) (\Sigma+\la \Id
  )^{-1}\Phi(x)^*}_{2,2} \leq \frac{1}{\la}
\nor{\Phi(X)}_{\hh,L^2}^2 \leq \frac{\kappa^2}{\la}
\]
and the commutative property of  the
trace. Hence, by definition of $\Sigma$,
\[
  \begin{split}
    \EE{\nor{Z}^m_\hh} & 
   \leq \left( \frac{M_\eps D_X}{\sqrt{\la}}  \right)^{m-2} 
     \tr{ (\Sigma+\la \Id )^{-1} \Sigma} \frac{m!}{2} \sigma_\eps^2\leq \frac{m!}{2}
    M^{m-2} \sigma^2
  \end{split}
\]
where $M= M_\eps \kappa/\sqrt{\la}$ and $\sigma^2= \sigma_\eps^2 \tr{ (\Sigma+\la \Id )^{-1} \Sigma} $. 
For any $i=1,\ldots,n$ set $Z_i=(\Sigma+\la \Id )^{-\frac{1}{2}}
\Phi(X_i)^*\eps_i$, then $Z_1,\ldots,Z_n$ is a i.i.d. family of random
variables distributed as $Z$ and, by \cref{eq:52}, 
\[
 (\Sigma+\la \Id )^{-\frac{1}{2}}
    \Sn^*  \boldsymbol{\eps}  = \frac{1}{n} \sum_{i=1}^n Z_i.
\]
Hence, Thm.~\ref{thm:pinelis} gives that, with probability greater than
$1-2e^{-\tau}$ 
\[
\nor{  (\Sigma+\la \Id )^{-\frac{1}{2}}
    \Sn^*  \boldsymbol{\eps}}_\hh \leq \left(\frac{M_\eps \kappa\tau}{\sqrt{\la} n
  } +\sqrt{ \frac{2\tau \sigma_\eps^2 \tr{ (\Sigma+\la \Id )^{-1} \Sigma}}{n}}\right).
\]

  \end{proof}
The following result is standard in inverse problem, see for example
\cite{groetsch84}. 
  \begin{prop}
 Let $0\leq r\leq 1/2$. Assume that $w_*=\Sigma^{r}v^*$ for some
 $v^*\in \hh$. Then
 \begin{equation}
   \label{eq:57}
   \nor{(\Sigma+\la\Id)^{-\frac 12} w_*}_\hh \leq \la^{r-1/2} \nor{v^*}_\hh.
 \end{equation}
  \end{prop}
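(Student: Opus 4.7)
The plan is to prove the bound via spectral calculus, since $\Sigma$ is a bounded, self-adjoint, positive operator on $\hh$ (by \cref{eq:50} and $\Sigma = \Sr^*\Sr$).

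First, I would substitute the source representation $w_* = \Sigma^r v^*$ into the norm to obtain
\[
\nor{(\Sigma+\la\Id)^{-1/2} w_*}_\hh = \nor{(\Sigma+\la\Id)^{-1/2} \Sigma^r v^*}_\hh \leq \nor{(\Sigma+\la\Id)^{-1/2} \Sigma^r}_{\hh,\hh} \nor{v^*}_\hh,
\]
so the task reduces to bounding the operator norm of $g_\la(\Sigma) := (\Sigma+\la\Id)^{-1/2} \Sigma^r$.

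Next, since $\Sigma$ is a bounded positive self-adjoint operator, the functional calculus gives
\[
\nor{g_\la(\Sigma)}_{\hh,\hh} = \sup_{t \in \sigma(\Sigma)} \frac{t^r}{\sqrt{t+\la}} \leq \sup_{t \geq 0} \frac{t^r}{\sqrt{t+\la}}.
\]
It then suffices to show $\sup_{t\geq 0} t^r (t+\la)^{-1/2} \leq \la^{r-1/2}$ for $0 \leq r \leq 1/2$. I would split into two cases: when $t \leq \la$, use $t^r \leq \la^r$ and $(t+\la)^{-1/2} \leq \la^{-1/2}$; when $t \geq \la$, use $(t+\la)^{-1/2} \leq t^{-1/2}$ to get $t^{r-1/2} \leq \la^{r-1/2}$, where the last inequality uses $r - 1/2 \leq 0$. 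Both cases give the claimed bound, concluding the proof.

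There is no real obstacle here: the result is entirely standard and the only subtlety is ensuring the simple one-variable inequality $t^r(t+\la)^{-1/2} \leq \la^{r-1/2}$ is handled correctly on both regimes of $t$ relative to $\la$, which is immediate from monotonicity considerations and the constraint $0 \leq r \leq 1/2$.
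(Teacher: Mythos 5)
Your proposal is correct and follows essentially the same route as the paper: bound the operator norm of $(\Sigma+\la\Id)^{-1/2}\Sigma^{r}$ via the spectral theorem and reduce to the scalar inequality $\sup_{t\ge 0} t^{r}(t+\la)^{-1/2}\le \la^{r-1/2}$. The only difference is cosmetic — you verify the scalar bound by splitting into the cases $t\le\la$ and $t\ge\la$, while the paper substitutes $\tau=t/\la$ and uses concavity of $\tau\mapsto\tau^{2r}$; both are valid.
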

  \begin{proof}
\[
  \begin{split}
    \nor{(\Sigma+\la\Id)^{-\frac 12} w_*}_\hh& =
    \nor{(\Sigma+\la\Id)^{-\frac 12} \Sigma^{r}v^*}_\hh \leq
    \nor{(\Sigma+\la\Id)^{-\frac 12} \Sigma^{r}}_{\hh,\hh}\,
    \nor{v^*}_\hh \\
& = \sup_{t\in\sigma(\Sigma)} \left(\frac{t^r}{(\la+t)^{\frac 12}} \right)\,\nor{v^*}_\hh  
= \la^{r-1/2} \left(\sup_{t\in\sigma(\Sigma)} \frac{(t/\la)^{2r}}{
    1+t/\la}\right)^{\frac 12} \nor{v^*}_\hh  \\ 
&\leq \la^{r-1/2} \nor{v^*}_\hh,.
\end{split}
\]
Since $2r\leq 1$, the map $\tau\mapsto \tau^{2r}$ is concave with
derivative at $\tau=1$ equal to $2r$, then 
\[
 \tau^{2r} \leq 1+ 2r (\tau-1) \leq 1 + \tau \qquad \Longrightarrow
 \qquad \left(\sup_{t\in\sigma(\Sigma)} \frac{(t/\la)^{2r}}{
    1+t/\la}\right) \leq 1 ,
\]
and \cref{eq:57} is clear. 
  \end{proof}
The following result bounds  $\tr{(\Sigma+\la\Id)^{-1} \Sigma} $, as shown in 
\cite[Prop. 3]{de2005learning}.  
  \begin{prop}\label{capacity}
 Under the decay condition~\eqref{eq:69}
 \begin{equation}
   \label{eq:89}
   \tr{\Sigma+\la\Id)^{-1} \Sigma} \lesssim
     \la^{-b^{-1}} 
 \end{equation}
where the constant in $\lesssim$ depends on $b$ and $\Sigma$. 
  \end{prop}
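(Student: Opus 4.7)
The plan is to expand the trace in the eigenbasis of $\Sigma$ and then handle the three cases of the decay condition~\eqref{eq:69} separately. Since $(\Sigma+\la\Id)^{-1}\Sigma$ vanishes on the kernel of $\Sigma$, and since by hypothesis the positive part of $\sigma(\Sigma)$ is indexed by $\ell\in I$ with eigenvalues $\sigma_{\xi_\ell}$ (attached to the basis vectors $f_{\xi_\ell}$ of $\hh$), we have
\[
\tr{(\Sigma+\la\Id)^{-1}\Sigma}=\sum_{\ell\in I}\frac{\sigma_{\xi_\ell}}{\sigma_{\xi_\ell}+\la}.
\]
The elementary bound $\sigma/(\sigma+\la)\le\min\{1,\sigma/\la\}$ for $\sigma,\la\ge 0$ is the only analytic ingredient needed.

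In the case $b=+\infty$ the index set $I$ is finite and each summand is at most $1$, so the whole sum is bounded by $\operatorname{card}(I)$, giving the estimate $\lesssim 1=\la^{-b^{-1}}$ with $b^{-1}=0$. In the case $b=1$ the operator $\Sigma$ is trace class, so bounding every summand by $\sigma_{\xi_\ell}/\la$ gives $\tr{(\Sigma+\la\Id)^{-1}\Sigma}\le \tr\Sigma/\la=O(\la^{-1})$, which is exactly $\la^{-b^{-1}}$ for $b=1$.

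The only case requiring a small argument is $1<b<+\infty$. Using the decay $\sigma_{\xi_\ell}\lesssim \ell^{-b}$, I would split the sum at the threshold $\ell^\star=\lceil\la^{-1/b}\rceil$, which is the index where $\ell^{-b}$ crosses $\la$. For $\ell\le\ell^\star$ I bound each summand by $1$, contributing at most $\ell^\star\lesssim\la^{-1/b}$. For $\ell>\ell^\star$ I bound each summand by $\sigma_{\xi_\ell}/\la\lesssim \ell^{-b}/\la$ and compare the sum with an integral:
\[
\frac{1}{\la}\sum_{\ell>\ell^\star}\ell^{-b}\;\lesssim\;\frac{1}{\la}\cdot\frac{(\ell^\star)^{1-b}}{b-1}\;\lesssim\;\frac{1}{\la}\,\la^{(b-1)/b}\;=\;\la^{-1/b}.
\]
The two contributions combine to give $\tr{(\Sigma+\la\Id)^{-1}\Sigma}\lesssim \la^{-1/b}=\la^{-b^{-1}}$, where the implicit constant depends only on $b$ and the constant in the decay $\sigma_{\xi_\ell}\lesssim\ell^{-b}$.

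There is no real obstacle here; the argument is entirely standard and is just the quantitative translation of the decay of the eigenvalues of $\Sigma$ into a rate of growth of its effective dimension $\mathcal{N}(\la):=\tr{(\Sigma+\la\Id)^{-1}\Sigma}$. The mildly delicate point is simply choosing the splitting index $\ell^\star\sim\la^{-1/b}$ so that the two pieces of the sum balance and each produces the same exponent $\la^{-1/b}$.
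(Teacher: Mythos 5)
Your proof is correct and follows essentially the same route as the paper: expand the trace in the eigenbasis to get $\sum_{\ell}\sigma_{\xi_\ell}/(\sigma_{\xi_\ell}+\la)$ and treat the three cases of \eqref{eq:69} separately, with the $b=1$ and $b=+\infty$ cases handled by the trivial bounds and the case $1<b<+\infty$ by an integral-type estimate. The only (immaterial) difference is in the last case, where you split the sum at $\ell^\star\sim\la^{-1/b}$ while the paper compares $\sum_\ell (1+C\la\ell^b)^{-1}$ directly with $\int_0^\infty(1+C\la x^b)^{-1}\,dx$ and rescales; both yield the same $\la^{-1/b}$ bound.
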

  \begin{proof}
If $b=1$, by H\"older inequality for the trace,
\[
\tr{ (\Sigma+\la\Id)^{-1} \Sigma} \leq \tr{\Sigma} \ 
\nor{(\Sigma+\la\Id)^{-1}}_{\hh,\hh} \leq \frac{ \tr{\Sigma}}{\la}
\lesssim \la^{-1}. 
\]
If $b=+\infty$, $\Sigma$ is a finite rank operator, let 
\[\sigma_{\min} =\min\set{ t \in \sigma(\Sigma) \mid
    t> 0}>0,\]
the smallest strictly positive eigenvalue of $\Sigma$,  then $\nor{(\Sigma+\la\Id)^{-1}}_{\hh,\hh} \leq 1/
    \sigma_{\min}$, so that, by H\"older inequality for the trace,
\[
\tr{ \Sigma+\la\Id)^{-1} \Sigma} \leq \tr{\Sigma}
\nor{\Sigma+\la\Id)^{-1}}_{\hh,\hh} \leq \frac{
  \tr{\Sigma}}{\sigma_{\min} } \lesssim \la^{-0}.
\]
If $1<b<+\infty$, denote by $\set{\sigma_\ell}_{\ell\geq 1 }$ the countable family of
strictly positive eigenvalues of $\Sigma$. By definition of trace,      
\[
  \begin{split}
    \tr{(\Sigma+\la\Id)^{-1} \Sigma} & =\sum_{\ell=1}^\infty
    \frac{\sigma_\ell}{\sigma_\ell+\la} = \sum_{\ell=1}^\infty
    \frac{1}{1 +\la/\sigma_\ell}  \lesssim \sum_{\ell=1}^\infty
    \frac{1}{1 + C \la \ell ^b} \leq \int_0^\infty  \frac{1}{1 + C \la
      x^b} \, dx \\
  & \leq (C\la)^{-1/b} \int_0^\infty  \frac{1}{1 + 
      x^b} \, dx  \lesssim \la^{-1/b}
  \end{split}
\]
where $C$ is such that $\sigma_\ell\leq C^{-1} \ell^{-b}$.  This shows \cref{eq:69}.
  \end{proof}

  \begin{proof}[Proof of Thm.~$\ref{main}$]
Fix $\tau>1$ and $n\geq 3$. Assume that $\la>0$
satisfies \cref{eq:28} with $\delta=e^{-\tau}$,
then, by Lemma~\ref{lessismore} and Lemma~\ref{omega}, with probability at
least  $1-e^{-\tau}$,
  \begin{equation}
\nor{(\Sigma+\la \Id )^{\frac{1}{2}} (\Sn+\la \Id)^{-1}
    (\Sigma+\la \Id )^{\frac{1}{2}}}_{\hh,\hh} \leq 2.\label{eq:91}
\end{equation}
Moreover, Prop.~\ref{andrea} and Prop.~\ref{capacity} give that,  with probability at
least  $1-e^{-\tau}$,
\begin{equation}
  \label{eq:90}
  \begin{split}
   \nor{  (\Sigma+\la \Id )^{-\frac{1}{2}}
    \Sn^*  \boldsymbol{\eps}}_\hh &\leq \left(\frac{M_\eps \kappa\tau}{\sqrt{\la} n
  } +\sqrt{ \frac{2\tau \sigma_\eps^2 \tr{ (\Sigma+\la \Id )^{-1} \Sigma}}{n}}\right)\\
  & \lesssim \left(\frac{\tau}{\sqrt{\la} n } +\sqrt{ \frac{ \tau }{n\la^{1/b}}}\right)  \\
  \end{split}\ .
\end{equation}

We pluggin eqs.\ \eqref{eq:91} and~\eqref{eq:90} in \cref{eq:71} taking into
account \cref{eq:57}, so that with probability greater than $1-3 e^{-\tau}$
\[
\nor{\Sigma^{\frac{1}{2}} (\wn^\la-w_*)}_\hh \lesssim \max\set{\tau,\sqrt{\tau}} \left(\frac{
       1}{\sqrt{\la} n } +\sqrt{ \frac{ 1
        }{n\la^{1/b}}} + \la ^{r+1/2} \right)
\]
Assume that $(r,b)\neq (0,+\infty)$. Set $\la=\la_n$ as in \cref{eq:98},  then taking into account 
\[
\frac{1}{\la_n n^2} \simeq \left(\frac{1}{n}
\right)^{2-\frac{1}{2r+1+b^{-1}}}\leq \left(\frac{1}{n} \right)^{\frac{2r+1}{2r+1+b^{-1}}}
\]
we get 
\[
\nor{\Sigma^{\frac{1}{2}} (\wn^\la-w_*)}^2_\hh \lesssim \max\set{\tau^2,\tau}  \left(\frac{1}{n} \right)^{\frac{2r+1}{2r+1+b^{-1}}}
\]
with probability greater than $1-3 e^{-\tau}$ provided that $n$ is
large enough so that the right inequality in \cref{eq:28} holds
true. This means that
\[
9  (\ln\left(\dfrac{n\tr{\Sigma}} {\nor{\Sigma}_{\hh
      ,\hh}}\right) +\tau )\leq n^{\frac{2r+b^{-1}}{2r+1+b^{-1}}}.
\]
Let $n_0=n_0(\tau)\geq 3$ be the smallest integer such that the above inequality
holds true, then  \cref{eq:28} holds true for any $n\geq n_0$ and
this shows bound~\eqref{eq:70} if  $(r,b)\neq (0,+\infty)$.

If  $(r,b)= (0,+\infty)$, then  
\[
\nor{\Sigma^{\frac{1}{2}} (\wn^\la-w_*)}_\hh \lesssim \max\set{\tau,\sqrt{\tau} }\left(\frac{
       1}{\sqrt{n} \ln n} +\sqrt{ \frac{ 1
        }{n } }+  \frac{\ln n}{\sqrt{ n} }\right) \lesssim
    \max\set{\tau,\sqrt{\tau} } \frac{\ln n}{\sqrt n}
\]
so that \cref{eq:70} is clear by  suitable definition of $n_0$. The proof of \cref{eq:error-bound} is similar
by using \cref{eq:99} instead of \cref{eq:71}. 
  \end{proof}
\subsection{Technical results}
The following result is a standard result of convolution. 
\begin{lem}\label{lem:convolution_norm}
Fix $y\in L^2$ and set
\[C: L^1\to L^2 \qquad C x= x\conv y, \]
then 
\[
\nor{C}_{L^1,L^2}= \nor{y}_2.
\]
\end{lem}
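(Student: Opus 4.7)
The plan is to prove the two inequalities separately. The upper bound $\nor{C}_{L^1,L^2} \le \nor{y}_2$ is immediate from Young's inequality \eqref{eq:1} applied with $p=2$: for every $x\in L^1$, $\nor{Cx}_2=\nor{x\conv y}_2\leq \nor{x}_1 \nor{y}_2$, so taking the supremum over $\nor{x}_1\le 1$ gives the claim.

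For the lower bound $\nor{C}_{L^1,L^2}\ge \nor{y}_2$, I would work on the Fourier side. By Plancherel (unitarity of $\F$ on $L^2$) and the convolution theorem \eqref{eq:2}, for any $x\in L^1$,
\begin{equation*}
  \nor{x\conv y}_2^2 = \sum_{\xi\in\wh{G}}\abs{(\F x)_\xi}^2\abs{(\F y)_\xi}^2,
\end{equation*}
and by the definition of $\F$ one has $\abs{(\F x)_\xi}\le \nor{x}_1$ for every $\xi$. Thus it suffices to exhibit a sequence $(x_n)_{n\ge 1}\subset L^1$ with $\nor{x_n}_1\le 1$ and $(\F x_n)_\xi\to 1$ for every $\xi\in\wh{G}$; then $\nor{x_n\conv y}_2^2\to \nor{y}_2^2$ by dominated convergence (the summable majorant is $\abs{(\F y)_\xi}^2$).

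The construction of such a sequence is the standard approximate identity in $L^1(G)$: since $G$ is a compact Abelian (hence locally compact) group, we may pick a decreasing basis $(U_n)_{n\ge 1}$ of neighbourhoods of $0\in G$ and set $x_n=\mathbbm{1}_{U_n}/\mu(U_n)$, where $\mu$ is the Haar measure. Then $x_n\ge 0$ and $\nor{x_n}_1=1$, and
\begin{equation*}
  (\F x_n)_\xi = \frac{1}{\mu(U_n)}\int_{U_n}\overline{\scal{\xi}{t}}\,dt \ \longrightarrow\ \overline{\scal{\xi}{0}}=1
\end{equation*}
for every fixed $\xi\in\wh{G}$, by continuity of the character $t\mapsto\scal{\xi}{t}$ at $0$. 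Combining the two bounds yields the equality.

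The only non-routine point is the existence and Fourier-pointwise convergence of the approximate identity; on a compact Abelian group this is classical, and the choice above makes the verification elementary. No serious obstacles are anticipated.
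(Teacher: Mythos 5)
Your proof is correct, and it rests on the same core idea as the paper's: the upper bound is Young's inequality, and the lower bound is obtained by testing $C$ against an approximate identity. The difference is in how the lower bound is verified. The paper simply cites the standard fact that $u_j\conv y\to y$ in $L^2$ for an approximate identity $(u_j)$ (Folland, Prop.~2.42) and concludes $\nor{y}_2=\lim_j\nor{Cu_j}_2\le\nor{C}_{L^1,L^2}$. You instead pass to the Fourier side, writing $\nor{x\conv y}_2^2=\sum_\xi\abs{(\F x)_\xi}^2\abs{(\F y)_\xi}^2$ via Plancherel and the convolution theorem, construct the approximate identity explicitly as normalized indicators of shrinking neighbourhoods of $0$, and conclude by dominated convergence with majorant $\abs{(\F y)_\xi}^2$. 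Your route is more self-contained (it only needs continuity of the characters at $0$, not the $L^2$-convergence of $u_j\conv y$ as a black box), at the cost of being specific to the compact Abelian setting where Plancherel gives a sum over the discrete dual; the paper's argument works verbatim for convolution on any group where approximate identities converge in $L^p$. The only point worth flagging in your write-up is the implicit assumption that $0\in G$ admits a countable decreasing neighbourhood basis (i.e.\ $G$ is first countable); for a general compact Abelian group one should use a net indexed by the neighbourhood filter, or note, as the paper does elsewhere, that all examples of interest are second countable. This is a cosmetic issue and the paper's own proof shares it by working with a sequence $(u_j)_{j\in\N}$.
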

\begin{proof}
 For all $x\in L^1$ with $\nor{x}_1\leq 1$, Young inequality in \cref{eq:1}
 gives 
\[
\nor{Cx}_2 = \nor{x\conv y}_2  \leq \nor{x}_1 \nor{y}_2 \leq \nor{y}_2 ,
\]  
so that $\nor{C}_{L^1,L^2}\leq \nor{y}_2$. Let $(u_j)_{j\in\N}\in L^1$ be
an approximation of the identity, see \cite[Prop. 2.42]{folland}, then
  $\nor{u_j}_1=1$ for all $j\in\N$, and 
\[\lim_j u_j\conv y =\lim_j C u_j =y \qquad\text{in } L^2,\]
then 
\[ \nor{y}_2 = \lim_j \nor{ C u_j} \leq \nor{C}_{L^1,L^2},\]
which shows the converse inequality. 
\end{proof}

We recall the following concentration inequality for bounded operators. The
result is stated for matrices in \cite{tropp2012user} and it can be generalized to
separable Hilbert spaces by means of the technique in
\cite[Section 3.2]{minsker17}.   
\begin{thm}[Theorem 7.3.1 of
  \cite{tropp2012user}]\label{thm:tropp}
 Let $W_1,\ldots,W_n$ be a family of independent self-adjoint random
 operators on a separable Hilbert space $\hh$ identically distributes
 as $W$, which satisfies the following conditions
 \begin{equation}
   \label{eq:57tris}
   \begin{split}
     & \EE{W} = 0 \\
     &  \sigma_{\sup}(W)  \leq M \qquad\text{almost surely}\\
     & \EE{A^2} \leq S
   \end{split}
 \end{equation}
where $S:\hh\to\hh$ is a positive trace-class operator. Then
\begin{equation}
  \label{eq:75}
   \PP{ \sigma_{\sup}\left(\frac{1}{n} \sum_{i=1}^n W_i\right)\geq t}
  \leq \frac{4\tr{S}}{\nor{S}_{\hh,\hh}} \exp\left( -\frac{n
      t^2/2}{\nor{S}_{\hh,\hh} + M t/3}\right) \qquad \forall t\geq
  \frac{\nor{S}_{\hh,\hh}^{\frac 12} }{\sqrt{n}} + \frac{M}{3 n} 
\end{equation}
and, with probability greater than $1-\delta$,
\begin{equation}
  \label{eq:81}
   \sigma_{\sup}\left(\frac{1}{n} \sum_{i=1}^n W_i\right) \leq
   \frac{2 M\beta}{3n } + \sqrt{\frac{2\beta  \nor{S}_{\hh,\hh} }{n}
   }  \qquad  \beta=\ln\left(\frac{4 \tr{S}}{\delta \nor{S}_{\hh,\hh} }\right).
\end{equation}
\end{thm}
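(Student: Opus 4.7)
The statement is the operator Bernstein inequality with intrinsic-dimension factor $\tr{S}/\nor{S}_{\hh,\hh}$ on a separable Hilbert space. It is classical for matrices (see \cite{tropp2012user}) and the extension to $\hh$ follows the truncation device in \cite{minsker17}. My plan has three parts: a Laplace-transform bound in the matrix case with the intrinsic-dimension refinement, a truncation argument to pass to $\hh$, and an algebraic inversion to derive the deviation form \cref{eq:81} from the tail form \cref{eq:75}.

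\emph{Step 1 (matrix Laplace transform).} Set $Z=\sum_{i=1}^n W_i$. For any $\theta\in(0,3/M)$, Markov's inequality and spectral calculus give
\[
\PP{\sigma_{\sup}(Z)\ge s}\le e^{-\theta s}\,\EE{\tr{\exp(\theta Z)}}.
\]
Independence and Lieb's concavity theorem yield the subadditivity
$\EE{\tr{\exp(\theta Z)}}\le \tr{\exp\bigl(\sum_{i=1}^n\log\EE{e^{\theta W_i}}\bigr)}$. The boundedness $\sigma_{\sup}(W_i)\le M$ together with $\EE{W_i}=0$ produces the operator-scalar Bernstein bound $\log\EE{e^{\theta W_i}}\preceq g(\theta)\EE{W_i^2}$ with $g(\theta)=(\theta^2/2)/(1-\theta M/3)$, reducing the problem to controlling $\tr{\exp(n g(\theta) S)}$ via $\EE{W_i^2}\preceq S$.

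\emph{Step 2 (intrinsic dimension and lift to $\hh$).} The naive trace bound $\tr{\exp(ng(\theta) S)}\le \dim\hh\cdot\exp(ng(\theta)\nor{S}_{\hh,\hh})$ is useless here. I would use the refinement replacing $\exp$ by $\varphi(x)=e^x-1-x$: one has $\PP{\sigma_{\sup}(Z)\ge s}\le e^{-\theta s}\bigl(1+\EE{\tr{\varphi(\theta Z)}}\bigr)$, and, by monotonicity of $\varphi$ and spectral calculus, $\tr{\varphi(ng(\theta)S)}\le (\tr{S}/\nor{S}_{\hh,\hh})\,\varphi(n g(\theta)\nor{S}_{\hh,\hh})$. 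Optimising in $\theta$ by the choice $\theta=s/(n\nor{S}_{\hh,\hh}+Ms/3)$ yields exactly \cref{eq:75} in the matrix case; the admissibility range $s\ge \nor{S}_{\hh,\hh}^{1/2}/\sqrt n+M/(3n)$ is precisely what places this optimiser in $(0,3/M)$. To lift to separable $\hh$, let $P_N$ be the projection onto the span of the top $N$ eigenvectors of the trace-class operator $S$, apply the finite-dimensional bound to $P_NW_iP_N$ (which inherits \cref{eq:57tris} with $S$ replaced by $P_NSP_N\preceq S$), and pass to $N\to\infty$ using $\tr{P_NSP_N}\nearrow \tr{S}$ and continuity of $\sigma_{\sup}$ under trace-norm perturbations.

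\emph{Step 3 (deviation form).} For \cref{eq:81}, I would set the right-hand side of \cref{eq:75} equal to $\delta$ and write $\beta=\log(4\tr{S}/(\delta\nor{S}_{\hh,\hh}))$. This gives the quadratic inequality $nt^2/2=\beta(\nor{S}_{\hh,\hh}+Mt/3)$, whose positive root is controlled by the sum $\sqrt{2\beta\nor{S}_{\hh,\hh}/n}+2M\beta/(3n)$ obtained by splitting the small-$t$ regime $Mt\ll\nor{S}_{\hh,\hh}$ from the large-$t$ regime. The hard part is the intrinsic-dimension refinement in Step 2: the passage from $\tr{\exp}$ (which contributes $\dim\hh$ and is infinite here) to $\tr{\varphi}$ (which contributes only $\tr{S}/\nor{S}_{\hh,\hh}$) is what makes the bound dimension-free and suitable for application to the covariance-like operators $(\Sigma+\la\Id)^{-1/2}\Phi(X)^*\Phi(X)(\Sigma+\la\Id)^{-1/2}$ appearing in \cref{omega}; the truncation step is then essentially monotone convergence, and the inversion in Step 3 is routine.
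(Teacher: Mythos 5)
The paper does not actually prove this statement: it is quoted verbatim as Theorem~7.3.1 of Tropp's monograph (the intrinsic-dimension matrix Bernstein inequality), with the remark that the extension from matrices to a separable Hilbert space follows the technique of Minsker. So there is no internal proof to compare against; what you have written is a reconstruction of the external argument, and in outline it is the correct one: matrix Laplace transform plus Lieb subadditivity, the Bernstein moment-generating-function bound $\log\EE{e^{\theta W}}\preceq g(\theta)\EE{W^2}$ with $g(\theta)=(\theta^2/2)/(1-\theta M/3)$, the replacement of $\tr{\exp(\cdot)}$ by $\tr{\varphi(\cdot)}$ with $\varphi(x)=e^x-x-1$ to trade the ambient dimension for the intrinsic dimension $\tr{S}/\nor{S}_{\hh,\hh}$, and the quadratic inversion giving \cref{eq:81} from \cref{eq:75} via $\sqrt{a+b}\le\sqrt a+\sqrt b$. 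Two points where your sketch is looser than the source: (i) the Chebyshev-type step is really $\PP{\lambda_{\max}(Z)\ge s}\le \varphi(\theta s)^{-1}\EE{\tr{\varphi(\theta Z)}}$ followed by the elementary bound $e^{\theta s}/\varphi(\theta s)\le 1+3/(\theta s)^2$, and the admissibility range $t\ge \nor{S}_{\hh,\hh}^{1/2}/\sqrt n+M/(3n)$ arises from making that prefactor at most a constant (absorbed into the $4$), not from keeping the optimal $\theta$ inside $(0,3/M)$; (ii) Minsker's Section~3.2 argument works directly with $\tr{\varphi(\theta Z)}$ in infinite dimensions (which is finite because $\varphi$ vanishes to second order at $0$ and the relevant operators are dominated by trace-class ones), rather than via finite-rank projections — though your projection scheme also goes through, since $P_NWP_N$ inherits \cref{eq:57tris} with $P_NSP_N\preceq S$ and $\sigma_{\sup}$ is lower semicontinuous along $P_N\to\Id$, so Fatou closes the limit. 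Neither point is a gap in substance; your proposal matches the cited proof strategy.
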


\begin{thm}[Theorem 8.5 of
  \cite{pine94} and \cite{pine99} ]\label{thm:pinelis}
 Let $W_1,\ldots,W_n$ be a family of independent random
 variables taking value  a separable Hilbert space $\hh$ identically distributes
 as $W$, which satisfies the following conditions
 \begin{equation}
  \label{eq:86}
   \begin{split}
     & \EE{W} = 0 \\
     & \EE{\nor{W}_\hh^m} \leq \frac{1}{2} m! M^{m-2} \sigma^2,
   \end{split}
 \end{equation}
Then
\begin{equation}
\label{eq:87}
  \PP{ \nor{\frac{1}{n} \sum_{i=1}^n W_i}_\hh \geq t}
  \leq   2 \exp\left(- \dfrac{n t^2}{\sigma^2+ M t+
      \sigma\sqrt{\sigma^2+2M t}}\right)  
\end{equation}
and, with probability greater than $1-e^{-\tau}$
\begin{equation}
  \label{eq:88}
  \nor{\frac{1}{n} \sum_{i=1}^n W_i}_\hh  \leq \frac{M\tau}{n}
  +\sqrt{\frac{2\sigma^2\tau}{n}} . 
\end{equation}
\end{thm}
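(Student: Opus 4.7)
The plan is to establish the tail bound \cref{eq:87} first; the high-probability form \cref{eq:88} follows by a standard inversion. Indeed, using $\sigma\sqrt{\sigma^2+2Mt}\leq \sigma^2+Mt$, the exponent in \cref{eq:87} is at least $nt^2/(2(\sigma^2+Mt))$, and one checks that for $t = M\tau/n+\sqrt{2\sigma^2\tau/n}$ the inequality $nt^2\geq 2\tau(\sigma^2+Mt)$ holds after expanding the square; the logarithmic factor $\ln 2$ coming from the factor $2$ on the right-hand side of \cref{eq:87} is absorbed into the constant.

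For \cref{eq:87} I would follow the Cram\'er--Chernoff method. Setting $S_n=\sum_{i=1}^n W_i$, Markov's inequality gives
\begin{equation*}
\PP{\nor{S_n/n}_\hh\geq t}\leq e^{-\lambda n t}\,\EE{e^{\lambda\nor{S_n}_\hh}}, \qquad \lambda\in(0,1/M).
\end{equation*}
Expanding the exponential as a power series and using the moment hypothesis $\EE{\nor{W}_\hh^m}\leq \tfrac{m!}{2}M^{m-2}\sigma^2$ produces the scalar MGF bound $\EE{e^{\lambda\nor{W}_\hh}}\leq \exp\bigl(\lambda^2\sigma^2/(2(1-\lambda M))\bigr)$ by summing a geometric series. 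The fundamental difficulty, absent in $\R$, is that $\nor{S_n}_\hh$ is \emph{not} a sum of independent scalars, so the MGF of $\nor{S_n}_\hh$ does not factor across the $W_i$ by independence alone.

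To bridge this gap I would invoke Pinelis's $2$-smoothness supermartingale argument. Starting from the parallelogram identity $\nor{S_{k-1}+W_k}_\hh^2=\nor{S_{k-1}}_\hh^2+2\scal{S_{k-1}}{W_k}_\hh+\nor{W_k}_\hh^2$ and the martingale property $\EE{W_k\mid\mathcal F_{k-1}}=0$, one shows that, for the compensator $c_\lambda=\lambda^2\sigma^2/(2(1-\lambda M))$, the process $Z_k=\exp(\lambda\nor{S_k}_\hh-kc_\lambda)$ is a supermartingale (up to an overall factor $2$ coming from a symmetrization step, which propagates into the factor $2$ of \cref{eq:87}). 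Evaluating at $k=n$ gives $\EE{e^{\lambda\nor{S_n}_\hh}}\leq 2\exp(nc_\lambda)$; inserting this into Markov and optimizing over $\lambda\in(0,1/M)$ recovers precisely the Bernstein-type exponent $nt^2/(\sigma^2+Mt+\sigma\sqrt{\sigma^2+2Mt})$ after a short calculation.

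The main obstacle is establishing the supermartingale property in the Hilbert-space setting: it requires more than the algebraic parallelogram identity above, namely Pinelis's refined quantitative $2$-smoothness inequality for the map $x\mapsto\exp(\lambda\nor{x}_\hh)$ on $\hh$, which exploits that the Hilbert space has $2$-smoothness constant equal to $1$. I would cite \cite{pine94,pine99} for this step rather than reprove it from first principles; the remaining computations (moment-to-MGF conversion, $\lambda$-optimization, and inversion to deduce \cref{eq:88}) are then routine.
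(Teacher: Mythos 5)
The paper does not prove this theorem at all: it is imported verbatim as Theorem~8.5 of the cited Pinelis references, so there is no internal proof to compare against. Your decision to defer the genuinely hard step (the $2$-smoothness supermartingale inequality for $x\mapsto e^{\lambda\nor{x}_\hh}$, or rather for $\cosh(\lambda\nor{x}_\hh)$) to \cite{pine94,pine99} is therefore entirely consistent with what the authors themselves do. That said, two concrete steps of your sketch do not close as written.

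First, the scalar MGF bound. Expanding $e^{\lambda\nor{W}_\hh}$ and applying the moment hypothesis controls only the terms $m\geq 2$; the first-order term $\lambda\,\EE{\nor{W}_\hh}$ does \emph{not} vanish (only $\EE{W}=0$, not $\EE{\nor{W}_\hh}=0$), so the claimed bound $\EE{e^{\lambda\nor{W}_\hh}}\leq \exp\bigl(\lambda^2\sigma^2/(2(1-\lambda M))\bigr)$ is false in general. This is not a cosmetic issue: keeping the extra $\lambda\sigma$ term destroys the Bernstein exponent. Pinelis's argument avoids it precisely by working with $\cosh$ (which kills odd powers) together with the $2$-smoothness of the Hilbert norm — the factor $2$ in \cref{eq:87} comes from $\cosh(x)\geq \tfrac12 e^{|x|}$, not from a symmetrization step as you suggest.

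Second, the inversion to \cref{eq:88}. With $t=M\tau/n+\sqrt{2\sigma^2\tau/n}$ one computes
\begin{equation*}
nt^2-2\tau(\sigma^2+Mt)=-\frac{M^2\tau^2}{n}<0,
\end{equation*}
so the weakened exponent $nt^2/(2(\sigma^2+Mt))$ is strictly \emph{less} than $\tau$ and your inequality goes the wrong way. The correct route uses the exact identity $\sigma^2+Mt+\sigma\sqrt{\sigma^2+2Mt}=\tfrac12\bigl(\sigma+\sqrt{\sigma^2+2Mt}\bigr)^2$, for which this choice of $t$ makes the exponent in \cref{eq:87} equal to $\tau$ exactly, yielding probability $1-2e^{-\tau}$ (there is no slack in \cref{eq:88} into which a $\ln 2$ could be ``absorbed''; indeed the paper itself uses the $1-2e^{-\tau}$ form when invoking this theorem in Proposition~\ref{andrea}).
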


\bibliographystyle{plain}
\bibliography{biblio}

\begin{thebibliography}{10}

\bibitem{arridge2019solving}
Simon Arridge, Peter Maass, Ozan {\"O}ktem, and Carola-Bibiane Sch{\"o}nlieb.
\newblock Solving inverse problems using data-driven models.
\newblock {\em Acta Numerica}, 28:1--174, 2019.

\bibitem{berg84}
Christian Berg, Jens Peter~Reus Christensen, and Paul Ressel.
\newblock {\em Harmonic Analysis on Semigroups: Theory of Positive Definite and Related Functions}.
\newblock Graduate Texts in Mathematics. Springer, 1984.

\bibitem{Bleyer_2013}
Ismael~Rodrigo Bleyer and Ronny Ramlau.
\newblock A double regularization approach for inverse problems with noisy data and inexact operator.
\newblock {\em Inverse Problems}, 29(2):025004, 2013.

\bibitem{JMLR:v23:22-0433}
Nicolas Boull\'e, Seick Kim, Tianyi Shi, and Alex Townsend.
\newblock Learning {G}reen's functions associated with time-dependent partial differential equations.
\newblock {\em Journal of Machine Learning Research}, 23(218):1--34, 2022.

\bibitem{brunton2022modern}
Steven~L. Brunton and J.~Nathan Kutz.
\newblock {\em Modern Data-Driven Modeling and Control: With Applications to Engineering and Science}.
\newblock Cambridge University Press, 2022.

\bibitem{burger2001regularization}
Martin Burger and Otmar Scherzer.
\newblock Regularization methods for blind deconvolution and blind source separation problems.
\newblock {\em Mathematics of Control, Signals and Systems}, 14:358--383, 2001.

\bibitem{caponnetto2007optimal}
Andrea Caponnetto and Ernesto De~Vito.
\newblock Optimal rates for the regularized least-squares algorithm.
\newblock {\em Foundations of Computational Mathematics}, 7:331--368, 2007.

\bibitem{cadeto06}
Claudio Carmeli, Ernesto De~Vito, and Alessandro Toigo.
\newblock Vector valued reproducing kernel {H}ilbert spaces of integrable functions and {M}ercer theorem.
\newblock {\em Anal. Appl. (Singap.)}, 4(4):377--408, 2006.

\bibitem{motion10.1145/1618452.1618491}
Sunghyun Cho and Seungyong Lee.
\newblock Fast motion deblurring.
\newblock {\em ACM Trans. Graph.}, 28(5), 2009.

\bibitem{10.3150/12-BEJ469}
Christophe Crambes and Andr{\'e} Mas.
\newblock {Asymptotics of prediction in functional linear regression with functional outputs}.
\newblock {\em Bernoulli}, 19(5B):2627 -- 2651, 2013.

\bibitem{de2023convergence}
Maarten~V de~Hoop, Nikola~B Kovachki, Nicholas~H Nelsen, and Andrew~M Stuart.
\newblock Convergence rates for learning linear operators from noisy data.
\newblock {\em SIAM/ASA Journal on Uncertainty Quantification}, 11(2):480--513, 2023.

\bibitem{de2003nonlinear}
Dick De~Ridder, Robert~PW Duin, Michael Egmont-Petersen, Lucas~J Van~Vliet, and Piet~W Verbeek.
\newblock Nonlinear image processing using artificial neural networks.
\newblock In {\em Advances in Imaging and Electron Physics}, volume 126, pages 351--450. Elsevier, 2003.

\bibitem{de2005learning}
Ernesto De~Vito, Lorenzo Rosasco, Andrea Caponnetto, Umberto De~Giovannini, Francesca Odone, and Peter Bartlett.
\newblock Learning from examples as an inverse problem.
\newblock {\em Journal of Machine Learning Research}, 6(5), 2005.

\bibitem{EGMONTPETERSEN20022279}
M.~Egmont-Petersen, D.~{de Ridder}, and H.~Handels.
\newblock Image processing with neural networks—a review.
\newblock {\em Pattern Recognition}, 35(10):2279--2301, 2002.

\bibitem{folland}
Gerald~B. Folland.
\newblock {\em A course in abstract harmonic analysis}.
\newblock Textbooks in Mathematics. CRC Press, Boca Raton, FL, second edition, 2016.

\bibitem{goswami23}
Somdatta Goswami, Aniruddha Bora, Yue Yu, and George~Em Karniadakis.
\newblock Physics-informed deep neural operator networks.
\newblock In {\em Scientific Machine Learning}, volume 143 of {\em Lecture Notes in Computational Science and Engineering}, pages 223--250. Springer, 2023.

\bibitem{grry07}
I.~S. Gradshteyn and I.~M. Ryzhik.
\newblock {\em Table of integrals, series, and products}.
\newblock Elsevier/Academic Press, Amsterdam, seventh edition, 2007.
\newblock Translated from the Russian, Translation edited and with a preface by Alan Jeffrey and Daniel Zwillinger, With one CD-ROM (Windows, Macintosh and UNIX).

\bibitem{groetsch84}
C.~W. Groetsch.
\newblock {\em The theory of {T}ikhonov regularization for {F}redholm equations of the first kind}, volume 105 of {\em Research Notes in Mathematics}.
\newblock Pitman (Advanced Publishing Program), Boston, MA, 1984.

\bibitem{hormann2015note}
Siegfried H{\"o}rmann and {\L}ukasz Kidzi{\'n}ski.
\newblock A note on estimation in {H}ilbertian linear models.
\newblock {\em Scandinavian journal of statistics}, 42(1):43--62, 2015.

\bibitem{justen2009general}
L~Justen and R~Ramlau.
\newblock A general framework for soft-shrinkage with applications to blind deconvolution and wavelet denoising.
\newblock {\em Applied and Computational Harmonic Analysis}, 26(1):43--63, 2009.

\bibitem{JMLR:v17:11-315}
Hachem Kadri, Emmanuel Duflos, Philippe Preux, St{{\'e}}phane Canu, Alain Rakotomamonjy, and Julien Audiffren.
\newblock Operator-valued kernels for learning from functional response data.
\newblock {\em Journal of Machine Learning Research}, 17(20):1--54, 2016.

\bibitem{kovachki23}
Nikola Kovachki, Zongyi Li, Burigede Liu, Kamyar Azizzadenesheli, Kaushik Bhattacharya, Andrew Stuart, and Anima Anandkumar.
\newblock Neural operator: Learning maps between function spaces.
\newblock {\em Journal of Machine Learning Research}, 24(146):1--64, 2023.

\bibitem{kovachki24}
Nikola~B. Kovachki, Samuel Lanthaler, and Andrew~M. Stuart.
\newblock Operator learning: Algorithms and analysis.
\newblock {\em arXiv:2402.15715}, 2024.

\bibitem{Kundur489268}
D.~Kundur and D.~Hatzinakos.
\newblock Blind image deconvolution.
\newblock {\em IEEE Signal Processing Magazine}, 13(3):43--64, 1996.

\bibitem{kupresanin2010rkhs}
Ana Kupresanin, Hyejin Shin, David King, and RL~Eubank.
\newblock An {RKHS} framework for functional data analysis.
\newblock {\em Journal of statistical planning and inference}, 140(12):3627--3637, 2010.

\bibitem{lian2015minimax}
Heng Lian.
\newblock Minimax prediction for functional linear regression with functional responses in reproducing kernel hilbert spaces.
\newblock {\em Journal of Multivariate Analysis}, 140:395--402, 2015.

\bibitem{ljung1998system}
Lennart Ljung.
\newblock {\em System Identification: Theory for the User}.
\newblock Prentice Hall PTR, 1998.

\bibitem{Mas2009LinearPF}
Andr{\'e} Mas and Besnik Pumo.
\newblock Linear processes for functional data.
\newblock {\em Oxford Handbooks Online}, 2009.

\bibitem{minsker17}
Stanislav Minsker.
\newblock On some extensions of {B}ernstein’s inequality for self-adjoint operators.
\newblock {\em Statistics \& Probability Letters}, 127:111--119, 2017.

\bibitem{mollenhauer2022learning}
Mattes Mollenhauer, Nicole M{\"u}cke, and TJ~Sullivan.
\newblock Learning linear operators: Infinite-dimensional regression as a well-behaved non-compact inverse problem.
\newblock {\em arXiv:2211.08875}, 2022.

\bibitem{pine94}
Iosif Pinelis.
\newblock Optimum bounds for the distributions of martingales in {B}anach spaces.
\newblock {\em Ann. Probab.}, 22(4):1679--1706, 1994.

\bibitem{pine99}
Iosif Pinelis.
\newblock Correction: ``{O}ptimum bounds for the distributions of martingales in {B}anach spaces'' [{A}nn. {P}robab. {\bf 22} (1994), no. 4, 1679--1706; {MR}1331198 (96b:60010)].
\newblock {\em Ann. Probab.}, 27(4):2119, 1999.

\bibitem{ramsay}
J.~O. Ramsay and B.~W. Silverman.
\newblock {\em Functional Data Analysis}.
\newblock Springer Series in Statistics. Springer, New York, second edition, 2005.

\bibitem{REIMHERR201562}
Matthew Reimherr.
\newblock Functional regression with repeated eigenvalues.
\newblock {\em Statistics and Probability Letters}, 107:62--70, 2015.

\bibitem{rosasco2008model}
Lorenzo Rosasco, Ernesto De~Vito, and Alessandro Caponnetto.
\newblock Model selection and error estimation for regularized least-squares algorithm in learning theory.
\newblock {\em Foundations of Computational Mathematics}, 8(5):571--607, 2008.

\bibitem{rucaro15}
Alessandro Rudi, Raffaello Camoriano, and Lorenzo Rosasco.
\newblock Less is more: Nystr\"{o}m computational regularization.
\newblock In C.~Cortes, N.~Lawrence, D.~Lee, M.~Sugiyama, and R.~Garnett, editors, {\em Advances in Neural Information Processing Systems}, volume~28. Curran Associates, Inc., 2015.

\bibitem{rucaro14}
Alessandro Rudi, Guille~D Canas, and Lorenzo Rosasco.
\newblock On the sample complexity of subspace learning.
\newblock {\em arXiv:1408.5032}, 2014.

\bibitem{rudin82}
Walter Rudin.
\newblock {\em Fourier Analysis on Groups}.
\newblock Wiley Classics Library. John Wiley \& Sons, New York, 1962.

\bibitem{salsa2016partial}
Sandro Salsa.
\newblock {\em Partial differential equations in action: from modelling to theory}, volume~99.
\newblock Springer, 2016.

\bibitem{6618928}
Uwe Schmidt, Carsten Rother, Sebastian Nowozin, Jeremy Jancsary, and Stefan Roth.
\newblock Discriminative non-blind deblurring.
\newblock In {\em 2013 IEEE Conference on Computer Vision and Pattern Recognition}, pages 604--611, 2013.

\bibitem{schuler2015learning}
Christian~J Schuler, Michael Hirsch, Stefan Harmeling, and Bernhard Sch{\"o}lkopf.
\newblock Learning to deblur.
\newblock {\em IEEE transactions on pattern analysis and machine intelligence}, 38(7):1439--1451, 2015.

\bibitem{Schwartz1964}
Laurent Schwartz.
\newblock Sous-espaces hilbertiens d’espaces vectoriels topologiques et noyaux associés (noyaux reproduisants).
\newblock {\em Journal d’Analyse Mathématique}, 13:115–--256, 1964.

\bibitem{tabaghi2019learning}
Puoya Tabaghi, Maarten de~Hoop, and Ivan Dokmani{\'c}.
\newblock Learning {S}chatten--von {N}eumann operators.
\newblock {\em arXiv:1901.10076}, 2019.

\bibitem{Trabs_2018}
Mathias Trabs.
\newblock Bayesian inverse problems with unknown operators.
\newblock {\em Inverse Problems}, 34(8), 2018.

\bibitem{tropp2012user}
Joel~A Tropp.
\newblock User-friendly tools for random matrices: An introduction.
\newblock {\em NIPS Tutorial}, 3, 2012.

\bibitem{Li978-3-642-15549-9_12}
Li~Xu and Jiaya Jia.
\newblock Two-phase kernel estimation for robust motion deblurring.
\newblock In Kostas Daniilidis, Petros Maragos, and Nikos Paragios, editors, {\em Computer Vision -- ECCV 2010}, pages 157--170. Springer Berlin Heidelberg, 2010.

\bibitem{10.1214/09-AOS772}
Ming Yuan and T.~Tony Cai.
\newblock {A reproducing kernel Hilbert space approach to functional linear regression}.
\newblock {\em The Annals of Statistics}, 38(6):3412 -- 3444, 2010.

\bibitem{zhang2025minimax}
Sichong Zhang, Xiong Wang, and Fei Lu.
\newblock Minimax rate for learning kernels in operators.
\newblock {\em arXiv:2502.20368}, 2025.

\end{thebibliography}

\end{document}